\newcommand\BibTeX{{\rmfamily B\kern-.05em \textsc{i\kern-.025em b}\kern-.08em
    T\kern-.1667em\lower.7ex\hbox{E}\kern-.125emX}}
\newtheorem{theorem}{Theorem}
\newtheorem{remark}{Remark}
\newtheorem{definition}{Definition}
\DeclareMathOperator*{\argmin}{arg\,min}
\definecolor{dkgreen}{rgb}{0,0.6,0}
\definecolor{gray}{rgb}{0.5,0.5,0.5}
\definecolor{mauve}{rgb}{0.58,0,0.82}
\begin{document}

\runninghead{Sun et al.}

\title{Mixed strategy Nash equilibrium for crowd navigation}

\author{Max Muchen Sun\affilnum{1}, Francesca Baldini\affilnum{2}, Katie Hughes\affilnum{1}, Peter Trautman\affilnum{2}, and Todd Murphey\affilnum{1}}

\affiliation{\affilnum{1}Department of Mechanical Engineering, Northwestern University, Evanston, IL 60208, USA
    \affilnum{2}Honda Research Institute, San Jose, CA 95134, USA}

\corrauth{Max Muchen Sun, Department of Mechanical Engineering, Northwestern University, Evanston, IL 60208, USA}

\email{msun@u.northwestern.edu}

\begin{abstract} 
    Robots navigating in crowded areas should negotiate free space with humans rather than fully controlling collision avoidance, as this can lead to freezing behavior. Game theory provides a framework for the robot to reason about potential cooperation from humans for collision avoidance during path planning. In particular, the mixed strategy Nash equilibrium captures the negotiation behavior under uncertainty, making it well suited for crowd navigation. However, computing the mixed strategy Nash equilibrium is often prohibitively expensive for real-time decision-making. In this paper, we propose an iterative Bayesian update scheme over probability distributions of trajectories. The algorithm simultaneously generates a stochastic plan for the robot and probabilistic predictions of other pedestrians' paths. We prove that the proposed algorithm is equivalent to solving a mixed strategy game for crowd navigation, and the algorithm guarantees the recovery of the global Nash equilibrium of the game. We name our algorithm Bayesian Recursive Nash Equilibrium (BRNE) and develop a real-time model prediction crowd navigation framework. Since BRNE is not solving a general-purpose mixed strategy Nash equilibrium but a tailored formula specifically for crowd navigation, it can compute the solution in real-time on a low-power embedded computer. We evaluate BRNE in both simulated environments and real-world pedestrian datasets. BRNE consistently outperforms non-learning and learning-based methods regarding safety and navigation efficiency. It also reaches human-level crowd navigation performance in the pedestrian dataset benchmark. Lastly, we demonstrate the practicality of our algorithm with real humans on an untethered quadruped robot with fully onboard perception and computation.
\end{abstract}

\keywords{Crowd navigation; Game theory; Gaussian processes}

\maketitle

\section{Introduction} 

\begin{figure*}[ht!]
    \centering
    \includegraphics[width=\textwidth]{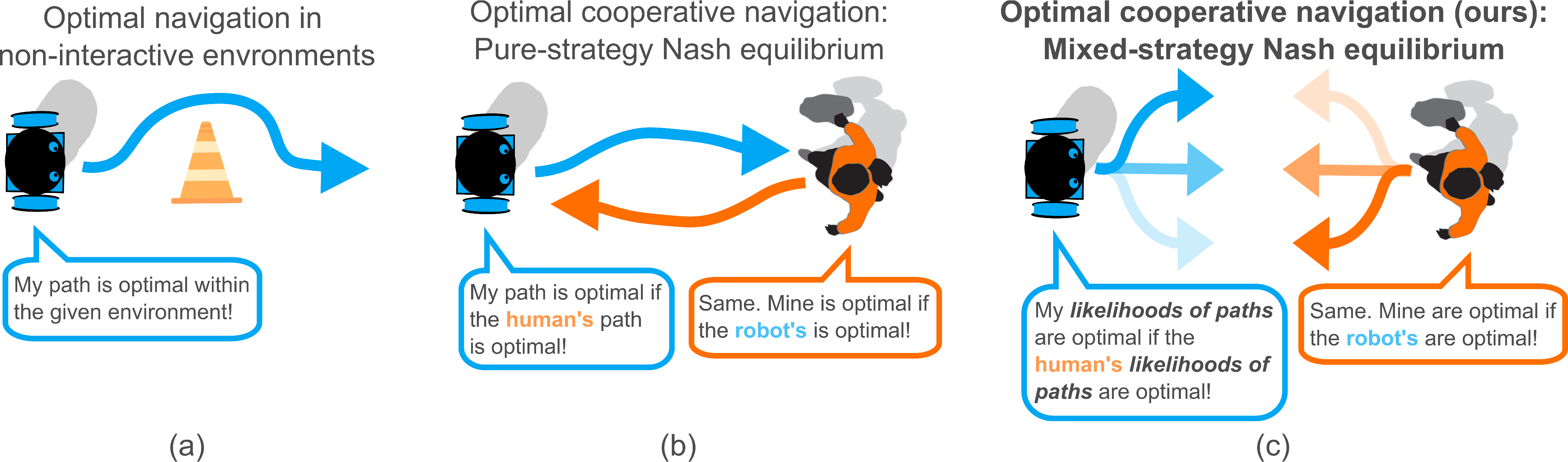}
    \captionsetup{justification=justified}
    \caption{Comparison of optimality criterion in different navigation frameworks. (a) In traditional robot navigation, the robot makes optimal decisions, such as minimizing the risk of collision, in a given and non-interactive environment; (b) Cooperative navigation finds optimal cooperative decisions for both the robot and the human. With the pure strategy Nash equilibrium model, the robot expects deterministic actions from humans, which is too assertive considering the uncertain nature of human behavior; (c) Our cooperative navigation framework uses mixed strategy Nash equilibrium as the optimality criterion, which finds probabilities of actions that represent the optimal cooperation strategies between the robot and human. This model maintains uncertainty during the interaction.}
    \label{fig:high_level_compare}
\end{figure*}

The ability to navigate fluently and safely in human-populated spaces is becoming increasingly crucial for deploying robots in real-world environments. Examples include autonomous driving in populated areas~\citep{bai_intention-aware_2015}, service robots at mass events~\citep{singh_use_2021}, and industrial robots operating alongside human workers~\citep{bansal_bayesnash_2022}. This problem of safe and efficient robot navigation in human crowds with minimal disruption to humans is often referred to as \emph{crowd navigation}. 

Human behavior studies~\citep{murakami_mutual_2021,bacik_lane_2023} reveal that humans anticipate each other's collision avoidance behavior when navigating crowds. Empirical results further indicate that navigation algorithms that predict human cooperation for collision avoidance during planning could improve both safety and navigation efficiency~\citep{trautman_robot_2015}. On the other hand, coupling the prediction of cooperative human behavior with motion planning is different from multi-agent planning, such as through crowd motion simulation~\citep{helbing_social_1995, van_den_berg_reciprocal_2011}, since humans do not follow pre-defined action rules. Instead, both the humans and the robot should be modeled as optimal planners whose individual objectives depend on others` actions~\citep{sadigh_planning_2018}. As a result, the simultaneous prediction of cooperative human behavior and robot motion planning becomes the outcome of assessing the coupled optimal actions of all the agents. This design leads to the application of game theory models in crowd navigation, specifically Nash equilibrium~\citep{nash_equilibrium_1950} as an optimality criterion for the coupled prediction and planning process.

In game theory, each agent tries to optimize a game strategy---a plan of action---for their individual objective of the game. However, each agent's individual objective depends on other agents' strategies, so no agent can optimize their strategy in isolation. Nash equilibrium describes a balanced state among all agents' strategies, where no agent wants to change their strategy unless others also change. In other words, each agent's Nash equilibrium strategy is optimal given other agents' Nash equilibrium strategies. Crowd navigation can be modeled as a game where the robot and pedestrians negotiate free space to ensure the joint safety of all agents and also minimize the compromise on their individual navigation plan. By finding the Nash equilibrium of such a game formula, the robot can plan a safe path while anticipating humans' reactions, leveraging potential human cooperation to avoid over-cautious collision avoidance behaviors.

There are two kinds of strategies that can lead to a Nash equilibrium: pure strategy and mixed strategy. We are interested in the latter. Pure strategy refers to choosing deterministic and specific actions, while mixed strategy refers to sampling actions from a probability distribution of possible pure strategies. Crucially, John Nash proved that not all games have a pure strategy Nash equilibrium, but at least one mixed strategy Nash equilibrium exists for all games~\citep{nash_non-cooperative_1951}. Our interest in mixed strategy Nash equilibrium stems from not only its mathematical rigor but also its non-deterministic nature. Prediction based on pure strategy Nash equilibrium would expect humans to follow the paths exactly as predicted, which is too assertive given the uncertain nature of human behavior. Mixed strategy Nash equilibrium, however, maintains such behavioral uncertainty during the robot’s decision-making. A conceptual comparison of traditional navigation framework, cooperative navigation with pure strategy Nash equilibrium, and with mixed strategy Nash equilibrium is shown in Figure~\ref{fig:high_level_compare}.

Despite its potential, applying mixed strategy Nash equilibrium to crowd navigation faces several challenges. First, computing mixed strategy Nash equilibrium is often considered impractical due to its computation burden---even for a 3-agent game, the computation can be close to NP-complete~\citep{daskalakis_complexity_2009}. Second, existing Nash equilibrium frameworks often focus on discrete actions and a finite number of strategies (e.g., poker). However, crowd navigation studies physical agents navigating in a continuous space. Third, the practicality of Nash equilibrium depends on how well the game design matches human behavior, yet there is no consensus on a game-theoretic model for crowd navigation.

In this work, we propose a simple iterative Bayesian update scheme for cooperative crowd navigation with mixed strategies. We formally prove that the algorithm guarantees the recovery of the global Nash equilibrium of a mixed strategy game suitable for crowd navigation. We name our algorithm Bayesian Recursive Nash Equilibrium (BRNE). In the BRNE game, each agent aims to minimize the expected risk of collision with other agents, while also minimizing the deviation of the optimal navigation strategy from a nominal navigation strategy. Furthermore, we propose a sampling-based model predictive crowd navigation framework based on BRNE, with the nominal strategies characterized based on Gaussian processes. The proposed navigation framework has a lower time complexity with respect to the number of agents compared to the state-of-the-art game-theoretic planners and runs in real-time on a laptop CPU and a low-power embedded computer. We further integrate the algorithm into an untethered quadruped robot and demonstrate the algorithm's practicality with real humans using fully onboard perception and computation. 

Our work diverges from existing game-theoretic crowd navigation methods. Existing methods follow a top-down approach, where no specification of the game objective structure and the game decision space is made in advance---these components will be specified later for applications such as crowd navigation. While the solution methodologies would apply to any game, not just crowd navigation game, this generality comes at the price of local optimality\footnote{Global Nash equilibrium solvers for arbitrary games is computationally intractable.} and unaffordable computation cost for real-time decision-making in human crowds. In contrast, we take a bottom-up approach, formulating a specific game based on unique behavioral features in crowd navigation, such as goal-oriented and collision avoidance-driven behaviors. This bottom-up approach provides a sufficient structure to develop a planner with game theory guarantees, including recovering global mixed strategy Nash equilibrium, without compromising the computation efficiency for real-time decision-making, as demonstrated in our experiments.

The rest of the paper is organized as follows: Section~\ref{sec:related_work} reviews the literature on crowd navigation and discusses the key difference between our method and existing methods. We introduce the iterative Bayesian update scheme and the model predictive crowd navigation framework in Section~\ref{sec:bayesian_updating}. Then we show how the algorithm guarantees the convergence to a global Nash equilibrium of a mixed strategy crowd navigation game in Section~\ref{sec:optimality_analysis}, with extra safety property analysis. Section~\ref{sec:evaluation} contains the details of the evaluation results and the real-world hardware demonstration. Lastly, we conclude the paper and provide further discussion in Section~\ref{sec:conclusion_discussion}.

\section{Related work} \label{sec:related_work}

\subsection{Early work on robot navigation in crowds}
Roboticists have been investigating navigation in human environments since the 1990s. Two landmark studies were the RHINO~\citep{burgard_interactive_1998} and MINERVA~\citep{thrun_probabilistic_2000} experiments, where robotics systems were deployed in museums to provide tour-guide to thousands of visitors. Additional work for tour-guide robots was also conducted later, such as Robox~\citep{siegwart_robox_2003}, Mobot~\citep{nourbakhsh_mobot_2003}, Rackham~\citep{clodic_rackham_2006}, and CiceRobot~\citep{chella_perception_2009}. A comprehensive review of the history of crowd navigation can be found in~\citep{mavrogiannis_core_2023}.

These works use conventional indoor navigation stacks such as the dynamic window approach~\citep{fox_dynamic_1997}, where humans are modeled as non-reactive obstacles. While these methods are sufficient when the robot interacts with sparse crowds, the robot's navigation efficiency is limited because of safety concerns. For example, the robot's speed is intentionally limited to avoid constant emergency stopping~\citep{nourbakhsh_mobot_2003}.  

\subsection{Prediction-then-planning methods}
The limitations of conventional navigation methods in human crowds motivate researchers to develop human-aware navigation methods. One prevalent framework in this category is to predict human motion and plan robot actions to avoid hindering human motion, we name such framework as \emph{prediction-then-planning}. Human motion can be predicted in the form of cost maps, such as from inverse reinforcement learning~\citep{ziebart_planning-based_2009,luber_socially-aware_2012} or unsupervised learning~\citep{henry_learning_2010}. Human motion can also be directly predicted in forms of trajectories with deep learning methods, such as using long-short term memory models (LSTM)~\citep{alahi_social_2016}, generative adversarial networks (GAN)~\citep{gupta_social_2018} or graph-based models~\citep{salzmann_trajectron_2020}. We refer readers to~\citet{rudenko_human_2020} for a comprehensive review of human trajectory prediction. Note that, in~\citet{scholler_what_2020}, the authors report that a simple constant velocity model could outperform deep learning-based trajectory prediction methods, arguing that existing neural network architectures are insufficient to capture interpersonal interactions. In addition, another thread of research aims to develop fast reactive planners around dynamic obstacles, such as through field representations~\citep{huber_avoiding_2022} or graph search~\citep{cao_dynamic_2019}.

Planning on top of human motion prediction also introduces the necessity of uncertainty-aware planning: motion planning methods that consider predictive uncertainty in human motion. In~\citet{du_toit_robot_2012}, a closed-loop belief update of dynamic obstacles' states is incorporated into receding horizon planning, which reduces the anticipated obstacle uncertainty and generates less conservative navigation behavior. A confidence-ware motion planning framework is proposed in~\citet{fridovich-keil_confidence-aware_2020} that maintains a Bayesian belief of the human motion prediction confidence with online observations. The planner can be robust against unexpected human behavior by explicitly modeling prediction confidence. Other works have also investigated robust motion planning with uncertain human motion prediction. In particular,~\citet{nishimura_risk-sensitive_2020} introduces a decoupled framework that combines learning-based trajectory prediction~\citep{salzmann_trajectron_2020} with risk-sensitive model predictive control~\citep{nishimura_sacbp_2021}. Lastly, a flipped approach is introduced in~\citet{nishimura_rap_2023} in the context of autonomous driving, where a risk-sensitive trajectory prediction approach is proposed for robust planning.

\subsubsection*{Limitations of prediction-then-planning methods} While exhibiting more compliant navigation behavior alongside humans, decoupled prediction and planning is limited by its core assumption---that the robot's action will not influence humans' actions. Failing to account for human reaction could lead to robot actions that surprise humans, who in turn react out of the robot's expectation, resulting in short oscillatory interaction, often referred to as ``reciprocal dance''~\citep{feurtey_simulating_2000}. Failing to account for human reaction in uncertainty quantification could also lead to over-conservative navigation robot behavior---without incorporating human reaction to belief updating during planning, the predictive uncertainty could lead the robot to consider all viable paths are unsafe and the only safe option is to stay still, a phenomenon often referred to as the ``freezing robot problem''~\citep{trautman_robot_2015}.

\subsection{Coupled prediction and planning}

Given the limitations of decoupled prediction and planning, it becomes necessary to lift the assumption that the robot does not interfere with human intents. Thus, an alternative framework, named \emph{coupled prediction and planning}, seeks to simultaneously plan robot actions and predict human actions. The seminal work \citep{trautman_unfreezing_2010} shows that cooperative collision avoidance (CCA)---where humans and robots collectively make decisions for collision avoidance---is crucial for avoiding artifacts such as the ``reciprocal dance'' and the ``freezing robot problem''. The importance of CCA in dense crowds is experimentally verified in a behavioral study~\citep{murakami_mutual_2021}.

In~\citet{trautman_robot_2015}, an individual's intent is modeled as a Gaussian process and CCA is modeled as a joint decision-making process by coupling Gaussian processes through a collision avoidance-based likelihood function. The statistical optimality of coupled Gaussian processes is further investigated in~\citet{trautman_sparse_2017} and \citet{trautman_real_2020}. In~\citet{sun_move_2021}, a distribution space crowd navigation model is proposed, with agent intent modeled as a distribution of trajectories. While similar to the mixed strategy model in this work, the model in~\citet{sun_move_2021} is not a game-theory model but is instead a joint decision-making model similar to~\citet{trautman_robot_2015}. Topology-based abstractions are also used for modeling CCA, such as through braids~\citep{mavrogiannis_socially_2017, mavrogiannis_multi-agent_2019} or topological-invariance~\citep{mavrogiannis_social_2018, mavrogiannis_hamiltonian_2021, mavrogiannis_winding_2023}. CCA is also studied from the perspective of opinion dynamics~\citep{bizyaeva_nonlinear_2023}, with a focus on breaking deadlock situations such as ``reciprocal dance''~\citep{cathcart_proactive_2023}.  

Another group of works focuses on the implicit modeling of cooperative collision avoidance. Implicit CCA models are often obtained from real-world data or simulated human pedestrians---the availability of data will impact the choice of modeling technique. Given the cost of human data collection, implicit modeling with human data cannot afford directly training robot navigation policies with humans; it uses techniques such as imitation learning~\citep{kim_maximum_2013} or inverse reinforcement learning~\citep{kim_socially_2016, kretzschmar_socially_2016}. Aside from collecting human data, an alternative solution is to simulate cooperative navigation agents using decentralized multi-agent collision avoidance methods, such as~\citet{helbing_social_1995} and \citet{van_den_berg_reciprocal_2011}. The benefit of implicit CCA modeling from simulated data is that the navigation policies can be trained directly with the reactive agents using reinforcement learning methods, such as in~\citet{chen_socially_2017}, \citet{chen_crowd-robot_2019}, \citet{liang_crowd-steer_2020}, and \citet{liu_social_2021}. However, the sim-to-real transfer of such policies remains an open challenge, mainly due to the distribution shifts between simulated crowd behavior and real-world human behavior. It is also unclear whether the reinforcement learning policies explicitly predict human cooperation during planning, instead of treating humans merely as dynamic obstacles---the latter case falls into the category of the prediction-then-planning framework.

Note that there is a relevant group of works on social convention-aware navigation. The goal is to take into account social conventions during motion planning, such as people's tendency to walk in groups~\citep{wang_group-based_2022, sathyamoorthy_comet_2022} or people's preference for not moving when forming certain geometrical structures~\citep{sathyamoorthy_frozone_2020}. Though conceptually similar, we consider these works to be essentially different from coupled prediction and planning, because the social conventions are defined a priori. As a result, these planners do not necessarily consider humans' \emph{reactions} to the robot's actions.

\subsubsection*{Limitations of coupled prediction and planning methods} 

The main limitation of existing coupled prediction and planning methods is the lack of a rigorous cooperation model since humans and robots cooperate only through mutual observations instead of explicit communication. Cooperation is often granted in coupled prediction and planning methods. For example, methods that formulate crowd navigation as a joint decision-making problem assume that humans and robots share the same joint decision-making objective, making these methods more similar to a centralized multi-agent planning framework. We argue that each agent in crowd navigation should be modeled as an independent decision-making individual. Cooperative collision avoidance is an emergent phenomenon generated by agents' desire to optimize their individual objectives, which depends on other agents' actions. This insight naturally leads to the application of game theory to crowd navigation.

\subsection{Game-theoretic planning}

Even though game-theoretic planning falls into the category of coupled prediction and planning, we separately discuss this body of work here, given its close connection to our work. The key difference between game-theoretic methods and other coupled prediction and planning methods is that game theory assumes each agent makes individual optimal decisions and provides a rigorous optimality criterion for decision-making---the concept of equilibrium. The most commonly used equilibrium notions are Nash equilibrium~\citep{nash_equilibrium_1950} and Stackelberg equilibrium~\citep{von_stackelberg_market_2011}. Stackelberg equilibrium, in general, has lower computation complexity since it enforces a leader-follower structure. On the other hand, Nash equilibrium is considered the more rigorous and natural notion for crowd navigation, since it assumes equal status for all agents and all agents make decisions simultaneously.

To apply game theory models, crowd navigation is often formulated as a dynamic game: agents, with states governed by certain dynamics, optimize control policies as pure strategies. One of the pioneering works in this direction is~\citet{sadigh_planning_2018}, where the interaction between a human-driven vehicle and an autonomous vehicle is modeled as a dynamic game. Since then,  dynamic game solvers with better numerical efficiency have been developed. In \citet{fridovich-keil_efficient_2020}, the authors formulate a general-sum dynamic game for crowd navigation, where the local Nash equilibrium is solved by combining an iterative best response scheme and linear-quadratic regulator. In \citet{lecleach_algames_2022}, a fast augmented Lagrangian solver is proposed, which converges locally to the generalized Nash equilibrium of a dynamic game and supports real-time model predictive control for autonomous driving. However, pure strategy Nash equilibrium models expect humans to react exactly as predicted, which is too assertive for interacting with humans. Furthermore, existing pure strategy game-theoretic planners can only approximate a locally optimal solution since computing the Nash equilibrium of a generalized game is computationally intractable~\cite {daskalakis_complexity_2009}. 

Probabilistic variants of pure strategy Nash equilibrium have been proposed to address the lack of flexibility. In~\citet{williams_best_2018}, an iterative best response scheme is combined with a model predictive path integral (MPPI) control framework to approximate cooperative stochastic control policies between two autonomous vehicles. However, this approach does not formally guarantee recovering a mixed strategy Nash equilibrium. In \citet{mehr_maximum-entropy_2023}, a stochastic dynamic game formula is proposed to lift the strict rationality assumption of Nash equilibrium and instead assumes bounded rationality of agents. The authors propose a new notion of equilibrium named Entropic Cost Equilibrium (ECE) and show that ECE is equivalent to the mixed strategy Nash equilibrium of a maximum entropy game. The proposed equilibrium formula also enables inverse inference of interaction policy from observations. In \citet{so_mpogames_2023} and \citet{so_multimodal_2022}, the multimodality in stochastic games is investigated from the perspective of linear-quadratic games and partially observable Markov decision process (POMDP), where the uncertainty originated from the partially observable objectives of non-ego agents. In~\citet{peters_learning_2022}, an explicit mixed strategy game formula is proposed, allowing agents to simultaneously optimize multiple multi-agent pure strategies, but this formula does not recover the mixed strategy Nash equilibrium. 

Lastly, game theory models can be combined with other methods for online adaptation during crowd navigation. For example, Nash equilibrium can be used to infer other agents' internal states, such as altruism. In \citet{schwarting_social_2019} the pure strategy Nash equilibrium model is used as an inference model for online estimation of human drivers' social value orientation, leading to the improved prediction accuracy of human driver actions during the interaction. Similar ideas are also explored in~\citet{peters_inference-based_2020}, \citet{le_cleach_lucidgames_2021}, and \citet{bansal_bayesnash_2022}. Another example of bootstrapping adaptive decision-making with game-theoretic models is~\citet{peters_contingency_2024}, where the game-theoretic model provides candidates for contingency planning.

\subsubsection*{Limitations of game-theoretic planning methods}

There are two major limitations of existing game-theoretic planning methods for crowd navigation. The first limitation is the high computation cost, which prevents existing methods from being applied to real-time navigation in human crowds. For example, as one of the fastest pure strategy dynamic game solvers, ALGAMES~\citep{lecleach_algames_2022} still suffers from a time complexity of $O(M^3)$ with $M$ being the number of players, thus can only perform real-time inference with no more than 2 human pedestrians. The computation of mixed strategy Nash equilibrium is even more burdensome. In~\citet{mehr_maximum-entropy_2023}, the method is also only demonstrated with no more than 2 human agents. In~\citet{peters_learning_2022}, the computation of mixed strategies is offloaded to a neural network trained offline and this approach is only demonstrated for two-agent interaction. The second limitation is the lack of flexibility in uncertainty representation. In \citet{mehr_maximum-entropy_2023}, the solution is based on the Linear-Quadratic-Gaussian formula. In~\citet{peters_learning_2022}, the mixed strategy model is limited to a small number of pure strategies, instead of a probability distribution. 

We argue that these two limitations are largely due to existing methods' ``top-down'' approach. In the ``top-down'' approach, multi-agent interaction is formulated and solved as a generalized game, where no specification of agent objective is made. The agent objective will be specified later for specific applications such as crowd navigation. While this ``top-down'' approach could be applied for arbitrary types of multi-agent interaction beyond crowd navigation, it suffers from high computation complexity and often requires narrowing down strategy representation (e.g., Gaussian distributions). Our work, however, takes a ``bottom-up'' approach---we tailor a stochastic game formula specifically for crowd navigation, based on unique properties in crowd navigation, such as the insight that each agent has an individual goal-oriented objective while being coupled with other agents through a collision avoidance objective. This ``bottom-up'' approach allows us to compute mixed strategy Nash equilibrium in closed form, enabling real-time inference with more agents (up to 8 agents on a laptop CPU, 5 agents on a low-power embedded computer). In addition, our algorithm and proofs support arbitrary types of distribution and collision avoidance cost functions, which further enriches the representation flexibility of our model.

\section{Bayesian update for crowd navigation} \label{sec:bayesian_updating}

In this section, we introduce the iterative Bayesian update scheme for crowd navigation. We will focus on the algorithm description and the model predictive navigation framework, leaving the formal properties of the algorithm, such as the guaranteed convergence to a Nash equilibrium, in Section~\ref{sec:optimality_analysis}. For the same reason, we leave the introduction of the complete game-theoretic formula in Section~\ref{sec:optimality_analysis} as well.

\subsection{Notations and definitions}
We assume there are $N$ agents, including the robot and the pedestrians, in a two-dimensional navigation environment. We start by defining a set of unique indices $\mathcal{I}=\{1, 2, \dots, N\}$ for all agents, where the first index $1$ is reserved for the robot. The state space of each agent is denoted as $\mathcal{X}\subset\mathbb{R}^{2}$, as we are primarily interested in agents' planar positions. 

While we will fully introduce the game theory formula in Section~\ref{sec:optimality_analysis}, we introduce the concept of strategy here. A strategy describes an agent's decision-making process. In the context of navigation, we assume agents make decisions regarding trajectories, which outline where they intend to travel over a given time period $[0, T]$.

\begin{definition}{(Pure strategy)}
    A pure strategy $s(t)$ is defined as a trajectory, which is specified as a continuous mapping from time to a state in the navigation space:
    \begin{align}
        s(t) :[0,T]\mapsto\mathcal{X} .
    \end{align}
\end{definition}

\begin{definition}{(Strategy space)}
    The strategy space is defined as the space of all feasible pure strategies that an agent may consider, we denote it as $\mathcal{S}$. 
\end{definition}

\begin{definition}{(Mixed strategy)}
    A mixed strategy is a probability distribution over the strategy space $\mathcal{S}$, represented as a probability density function $p(s)$:
    \begin{align}
        p(s):\mathcal{S}\mapsto\mathbb{R}_0^{+} \\
        \int_{\mathcal{S}} p(s) ds = 1.
    \end{align}
\end{definition}

\begin{definition}{(Mixed strategy space)}
    The mixed strategy space, denoted as $\mathcal{P}$, is the space of all probability density functions over the strategy space $\mathcal{S}$.
\end{definition}

\begin{definition}{(Nominal mixed strategy)}
    Each agent is assumed to have a nominal mixed strategy before interacting with other agents, denoted as $p^\prime_i(s)$, with $i$ being the agent index.
\end{definition}
\begin{remark} \label{remark:nominal_strategy}
    We assume the nominal mixed strategy represents the agent's intent without the presence of other agents. Thus, it does not reflect any collision avoidance behavior.
\end{remark}

\begin{definition}{(Collision risk)} \label{def:risk_function}
    Collision risk is defined as a function $r(s_1, s_2):\mathcal{S}\times\mathcal{S}\mapsto \mathbb{R}_0^{+}$, which evaluates the collision risk between two pure strategies (trajectories). 
\end{definition}

\begin{definition}{(Expected collision risk)}
    Given agent $i$'s mixed strategy $p_i(s)$, the expected risk of another pure strategy $s(t)$ colliding with agent $i$ is:
    \begin{align}
        \mathbb{E}_{p_i}[r](s) = \int_{\mathcal{S}} r(s,\xi) p_i(\xi) d\xi .
    \end{align}
\end{definition}

\begin{definition}{(Joint expected collision risk)}
    The joint expected collision risk between two agents is defined as the joint expectation of collision risk with respect to their mixed strategies:
    \begin{align}
        \mathbb{E}_{p_i,p_j}[r]
        = \int_{\mathcal{S}}\int_{\mathcal{S}} p_{i}(s_i) p_{j}(s_j) r(s_i,s_j) ds_i ds_j . \label{eq:joint_expect_risk}
    \end{align}		
\end{definition}

\begin{figure*}[ht!]
    \centering
    \includegraphics[width=\textwidth]{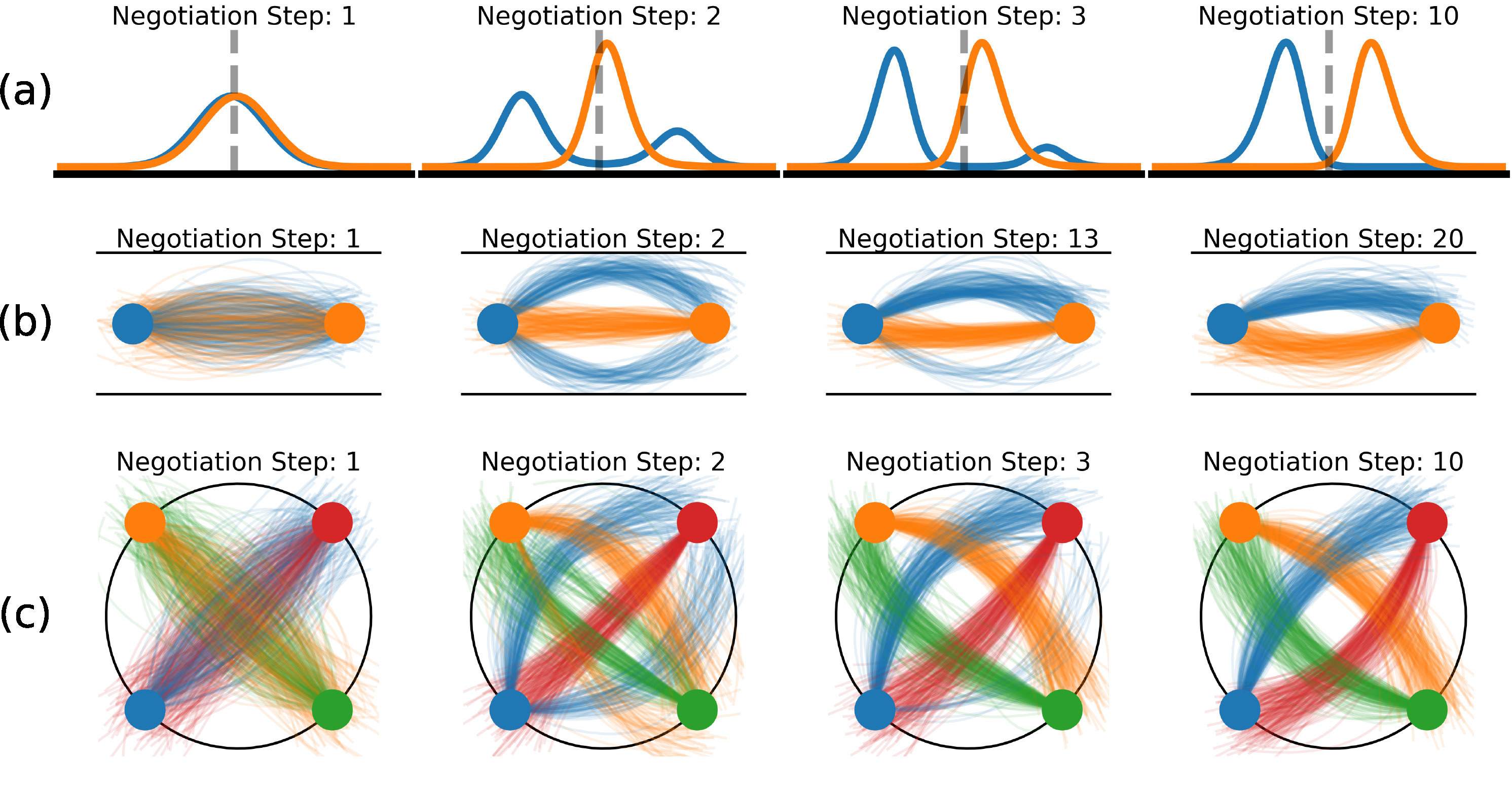}
    \captionsetup{justification=justified}
    \vspace{-1em}
    \caption{Examples of the iterative Bayesian update process. (a) Two-agent negotiation in one-dimensional space. (b) Two-agent hallway passing, where the mixed strategy is visualized as trajectory samples. (c) Four-agent crossing.} 
    \label{fig:algo_examples}
\end{figure*}

\subsection{Iterative Bayesian update with two agents}

We describe the iterative Bayesian update scheme that finds cooperative mixed strategies between two navigation agents. Bayesian update requires two key components: a prior belief and a conditional likelihood function---the latter is often interpreted as a measurement model. Here, the prior belief is each agent's nominal mixed strategy $p^\prime(s)$. By assuming each agent follows the \emph{expected utility hypothesis}~\citep{von_neumann_theory_1947}, we propose the following conditional likelihood function to reflect collision avoidance behavior.

\begin{definition}{(Conditional likelihood)}
    Given the mixed strategy of agent $j$, the likelihood of measuring another agent following a pure strategy $s(t)$ is defined as:
    \begin{align}
        \color{black} z(s\vert p_j) & \color{black} \propto \exp\Big({-}\mathbb{E}_{p_j}[r](s)\Big). \label{eq:conditional_likelihood}
    \end{align}
\end{definition}

\begin{remark}
    The inverse exponential in (\ref{eq:conditional_likelihood}) is an empirical design choice, inspired by Gaussian distributions. But as will be shown later, the inverse exponential is necessary for the algorithm to converge to Nash equilibrium.
\end{remark}

\begin{definition}{(Bayesian posterior strategy)}
    Given the nominal mixed strategy $p_i^\prime(s)$ of agent $i$ and the current mixed strategy $p_j(s)$ of agent $j$, the Bayesian posterior mixed strategy of agent $i$ after applying Bayes' rule is:
    \begin{align}
        p_i(s) & = \eta \cdot p_i^\prime(s) \cdot z(s\vert p_j) \\
        & = \eta \cdot p_i^\prime(s) \cdot \exp\Big({-}\mathbb{E}_{p_j}[r](s)\Big), \label{eq:twoagent_posterior}
    \end{align} where $\eta$ is the normalization term.
\end{definition}

With the Bayesian posterior strategy defined in (\ref{eq:twoagent_posterior}), we now describe the complete iterative Bayesian update algorithm with two agents in Algorithm~\ref{algo:two_agent_update}. Figure~\ref{fig:algo_examples}(a) and Figure~\ref{fig:algo_examples}(b) show evolution of two agents' mixed strategies in an illustrative one-dimensional example and a hallway passing scenario, respectively.

\begin{algorithm} [t!]
    \caption{Two-agent iterative Bayesian update}
    \label{algo:two_agent_update}
    \begin{algorithmic}[1]
        \Procedure{TwoAgentUpdate}{$p_{i}^\prime, p_{j}^\prime$}
        \State $k \gets 0$ \Comment{$k$ is the negotiation step.}
        \State $p_{i}^{[k]} \gets p_{i}^\prime$
        \State $p_{j}^{[k]} \gets p_{j}^\prime$
        \While{convergence criterion not met}
            \State $p_{i}^{[k+1]}(s) \gets \eta \cdot p_{i}^\prime(s) \cdot z(s\vert p_{j}^{[k]})$
            \State $p_{j}^{[k+1]}(s) \gets \eta \cdot p_{j}^\prime(s) \cdot z(s\vert p_{i}^{[k+1]})$
            \State $k \gets k+1$
        \EndWhile
        \State \textbf{return} $p_{i}^{[k]}$ and $p_{j}^{[k]}$
        \EndProcedure
    \end{algorithmic}
\end{algorithm}

\begin{algorithm} [t!]
    \caption{Multi-agent iterative Bayesian update}
    \label{algo:multi_agent_update}
    \begin{algorithmic}[1] 
        \Procedure{MultiAgentUpdate}{$p^\prime_{1}, \dots, p^\prime_{N}$}
        \State $k \gets 0$ \Comment{$k$ is the negotiation step.}
        \For{$i\in[1,N]$}
            \State $p_{i}^{[k]}(s) \gets p^\prime_{i}(s)$
        \EndFor
        \While{convergence criterion not met}
            \For{$i\in[1,N]$}
                \State $p^{[k]}_{/i} \gets {\sum_{j<i}} p^{[k+1]}_j {+} \sum_{j>i} p^{[k]}_j$
                \State $p_{i}^{[k+1]} \gets \eta \cdot p^\prime_{i} \cdot z\left(s\vert p^{[k]}_{/i} \right)$ \Comment{See (\ref{eq:multiagent_conditional_likelihood})}
            \EndFor
            \State $k \gets k+1$
        \EndWhile
        \State \textbf{return} $p_{1}^{[k]}, \dots, p_{N}^{[k]}$
        \EndProcedure
    \end{algorithmic}
\end{algorithm}

\subsection{Iterative Bayesian update with multiple agents}

We now extend Algorithm \ref{algo:two_agent_update} to an arbitrary number $N$ of agents. We first extend the definition of the conditional likelihood function. 

\begin{definition}{(Conditional likelihood (multi-agent))}
    Given the mixed strategies of all agents other than $i$, the likelihood of measuring agent $i$ following a pure strategy $s(t)$ is defined as:
    \begin{align}
        \color{black} z(s\vert p_{/i}) & \color{black} \propto \exp\Big({-}\mathbb{E}_{p_{/i}}[r](s)\Big). \label{eq:multiagent_conditional_likelihood} \\
        \color{black} \mathbb{E}_{p_{/i}}[r](s) & \color{black} = \sum_{j\in\mathcal{I}/i} \mathbb{E}_{p_j}[r](s).
    \end{align}
\end{definition}

\begin{definition}{(Bayesian posterior strategy (multi-agent))}
    Given the nominal mixed strategy $p_i^\prime(s)$ of agent $i$ and the current mixed strategies of all the rest of the agents, the Bayesian posterior mixed strategy of agent $i$ after applying Bayes' rule is:
    \begin{align}
        p_i(s) & = \eta \cdot p_i^\prime(s) \cdot z(s\vert p_{/i}) \\
        & = \eta \cdot p_i^\prime(s) \cdot \exp\Big({-}\mathbb{E}_{p_{/i}}[r](s)\Big) ,\label{eq:multiagent_posterior}
    \end{align} where $\eta$ is the normalization term.
\end{definition}

The complete iterative Bayesian update algorithm for multi-agent crowd navigation is described in Algorithm~\ref{algo:multi_agent_update}. Figure~\ref{fig:algo_examples}(c) shows the evolution of multi-agent mixed strategies in a crossing scenario. We name our algorithm \emph{Bayesian Recursive Nash Equilibrium (BRNE)}.

Across all the examples in Figure~\ref{fig:algo_examples}, we want to highlight two important properties of our algorithm: (1) Bayesian update could capture non-symmetric and multimodal mixed strategies, which expands the expressiveness of our method; (2) Although the iterative Bayesian update scheme does not simultaneously update all agents' strategies in each iteration, the converged mixed strategies can still be symmetrical.

\begin{figure*}[ht!]
    \centering
    \includegraphics[width=\textwidth]{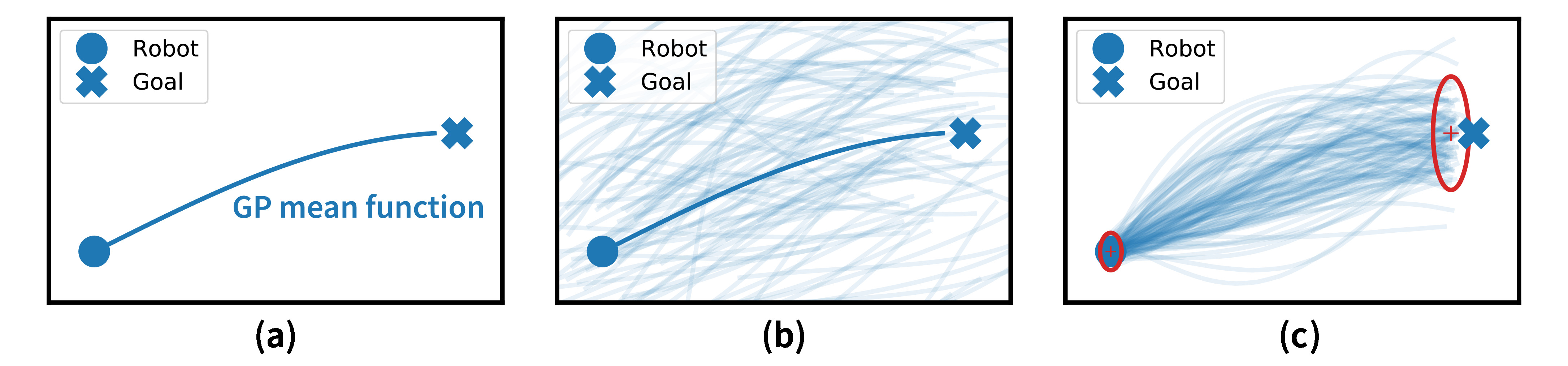}
    \captionsetup{justification=justified}
    \caption{Illustration of specifying a nominal mixed strategy with a Gaussian process (GP). (a) We first specify a trajectory as the mean function of the GP. For the robot, it would be a trajectory toward the goal, generated by a meta-planner; (b) We then specify the covariance kernel parameters, either learned from datasets or hand-tuned, which give us the GP prior distribution; (c) The GP prior is insufficient as the nominal mixed strategy. It needs to be conditioned at specific time steps with user-specified marginal uncertainty. In the figure, we condition the GP prior on the first and last time step (the specified marginal uncertainty is shown as the red ellipse); the resulting distribution is the robot's nominal mixed strategy.}
    \label{fig:gp_illustration}
    \vspace{+1em}
\end{figure*}

\begin{figure*}[ht!]
    \centering
    \includegraphics[width=\textwidth]{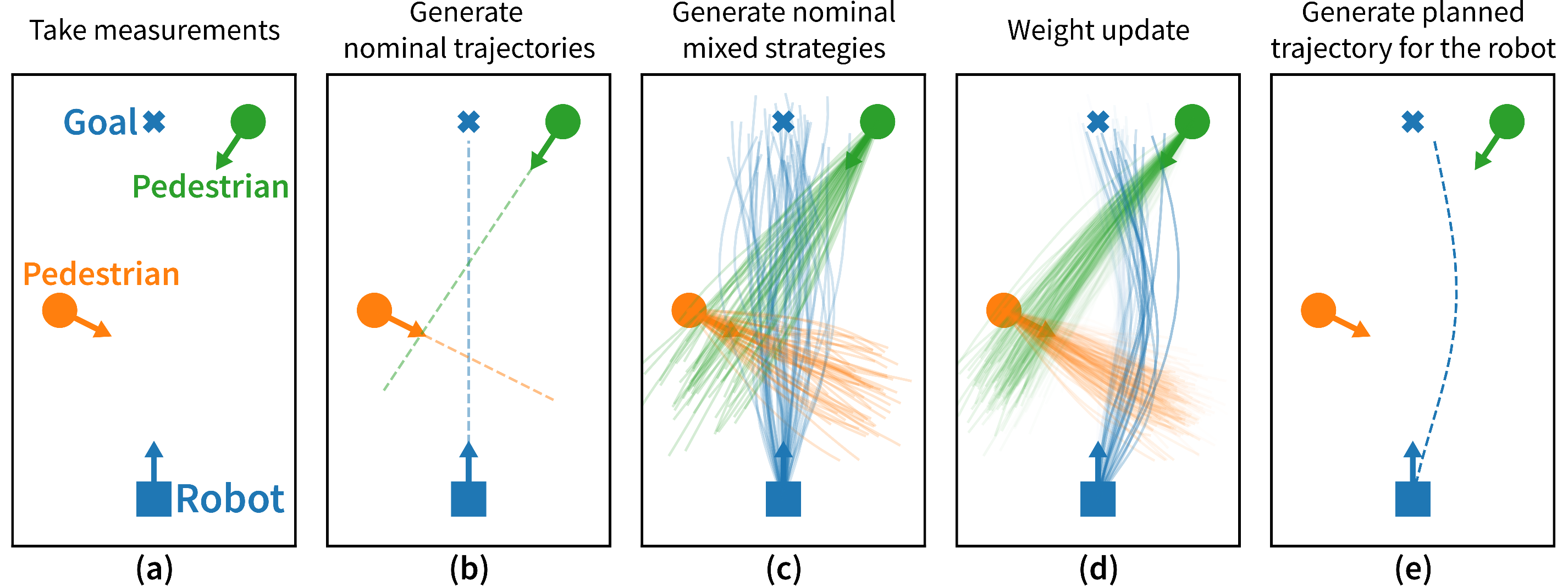}
    \captionsetup{justification=justified}
    \caption{Illustration of the model predictive crowd navigation framework. (a) The robot first takes measurements of nearby pedestrians' positions and velocities; (b) The robot then generates the mean functions of the Gaussian processes for the pedestrians and for itself; (c) The robot specifies the Gaussian processes as the nominal mixed strategies for all agents and draws trajectory samples from them; (d) The weights of the trajectory samples are updated based on Algorithm~\ref{algo:multi_agent_update} until convergence; (e) The mean of the robot's converged mixed strategy becomes the robot's planned trajectory.}
    \label{fig:algorithm_illustration}
\end{figure*}

\subsection{Iterative Bayesian update for model predictive crowd navigation}

We now apply the iterative Bayesian update scheme in Algorithm~\ref{algo:multi_agent_update} as a model predictive crowd navigation framework. The key insight is that the mixed strategy of each agent can be modeled through trajectory samples, with which Algorithm~\ref{algo:multi_agent_update} is essentially a weight update scheme for the samples, allowing fast computation in practice. Below, we introduce the practical algorithmic design. 

\subsubsection*{Motivation for sample-based representation} Computing the analytical mixed strategies following the Bayesian update in Algorithm~\ref{algo:multi_agent_update} is intractable in practice, as the mixed strategies are represented as continuous probability density functions in the analytical formula. This is a common issue among Bayesian filter methods, as computing the analytical Bayesian posterior is, in general, intractable. Importance sampling~\citep{thrun_probabilistic_2005} is one of the most widely used solutions to approximate the posterior from a Bayesian update efficiently. It represents the prior belief using samples drawn from the prior belief, then it computes the posterior belief by updating the sample weights based on the measurement model. With the sample weights updated, the samples can be updated further through resampling methods, such as rejection sampling~\citep{casella_generalized_2004}, for the next iteration. We apply importance sampling to approximate Algorithm~\ref{algo:multi_agent_update} in practice. The mixed strategies are represented as samples and weights are computed based on the Bayesian update step in Algorithm~\ref{algo:multi_agent_update}. Similar sampling-based representation is also used in stochastic optimal control, such as path integral control~\citep{theodorou_generalized_2010}.

\subsubsection*{Algorithm overview} In the proposed sampling-based model predictive crowd navigation framework, the robot repeatedly (1) observes the pedestrian position and velocity, (2) generates trajectory samples to represent the nominal mixed strategies for the pedestrians, (3) generates trajectory samples toward the navigation destination to represent the nominal mixed strategy for itself, (4) applies Algorithm~\ref{algo:multi_agent_update} to update the weights of the trajectory samples, and (5) computes the optimal control signal by tracking the weighted average trajectory from the robot's converged mixed strategy. The overall process is visualized in Figure~\ref{fig:algorithm_illustration}.

\subsubsection*{Gaussian process mixed strategy model} One of the requirements for the sampling-based Bayesian update is that we can draw samples from the prior belief, also often known as the proposal distribution, which in Algorithm~\ref{algo:multi_agent_update} is the nominal mixed strategy of each agent. We model the nominal mixed strategy for each agent using Gaussian processes~\citep{rasmussen_gaussian_2006}. A Gaussian process is an infinite-dimensional normal distribution over continuous functions. In our case, these continuous functions are the trajectories, as we model a trajectory as a continuous mapping from time to the agent state. A Gaussian process is characterized by a mean function and a covariance kernel function. We use a constant velocity model as the mean function for each pedestrian, assuming the pedestrian will keep the current velocity during the robot's planning horizon\footnote{In \citet{scholler_what_2020}, the constant velocity model is shown to be competitive against state-of-the-art learning-based models for trajectory prediction.}. A high-level meta-planner generates the mean function for the robot as a trajectory from the robot's current location toward the navigation goal. With the presence of static obstacles, the mean function trajectory should also avoid the obstacles. Such trajectory planning problems have been well-studied and can be solved using methods such as RRT or trajectory optimization. The parameters of the covariance kernel function can be learned from existing datasets through standard inference techniques such as maximum likelihood estimation~\citep{trautman_robot_2015} or hand-tuned if the number of parameters is small (e.g., with radial basis kernels). Once a Gaussian process is specified, following the same step in Gaussian process regression (Eq 2.24 in \citet{rasmussen_gaussian_2006}), we can condition the specified Gaussian process on discrete time steps. This conditioning happens offline and converts the infinite-dimensional Gaussian process to a finite-dimensional normal distribution with each dimension representing a time step. It also allows us to specify the marginal uncertainty at specific time steps. The converted multivariate normal distribution is an agent's nominal mixed strategy in practice, from which trajectory samples can be drawn efficiently during runtime, as the sampling of trajectories is now equivalent to sampling from a normal distribution. Figure~\ref{fig:gp_illustration} illustrates the steps of specifying a nominal mixed strategy with a Gaussian process. Lastly, Gaussian processes do not explicitly model agents' dynamics. Instead, the Gaussian process kernel preserves the geometrical smoothness of the trajectory samples. Similar Gaussian process-based trajectory distribution representations have been verified in other works for crowd navigation~\citep{trautman_robot_2015} and motion planning~\citep{mukadam_continuous-time_2018}.

\begin{remark}
    Even though we use a Gaussian process-based model for the mixed strategy, it is not the only choice. For pedestrians, the nominal mixed strategy can be the predicted trajectory samples drawn from neural network-based trajectory prediction frameworks, such as Trajectron~\citep{salzmann_trajectron_2020} or TrajNet~\citep{kothari_human_2022}. For the robot, instead of directly drawing samples in the trajectory space, we can also draw samples in the space of control signals by randomly perturbing a nominal control signal, similar to the widely used model predictive path integral (MPPI) control framework~\citep{theodorou_generalized_2010, williams_aggressive_2016}. We choose Gaussian processes for computation and sampling efficiency. Inference with Gaussian processes, which computes both the mean function and the covariance kernel from existing data, is known to be computationally expensive. However, we avoid performing Gaussian process inference by specifying the mean functions and the kernel parameters in advance. Therefore, drawing trajectory samples is equivalent to drawing samples from a multivariate normal distribution, which is affordable for real-time computation and faster than inference from neural networks.
\end{remark}

\subsubsection*{Weight update for samples} Once the trajectory samples are generated, we can compute the conditional likelihood function (\ref{eq:multiagent_conditional_likelihood}) for each sample as the updated weight, where the continuous integral can be approximated using Monte-Carlo integration based on the samples. After normalizing the weights, the now-weighted trajectory samples represent the Bayesian posterior in (\ref{eq:multiagent_posterior}). We repeat this process until convergence, and the weighted average trajectory from the robot's converged mixed strategy will be the robot's planned trajectory, from which the robot will compute control signals to track it. Note that, even though Algorithm~\ref{algo:multi_agent_update} requires only one agent's mixed strategy to be updated at a time, the weights of the trajectory samples from the same agent can be updated simultaneously, which can benefit from parallel computation for better computation efficiency. Pseudocode for an importance sampling-based implementation and a rejection sampling-based implementation are included in the appendix.

\subsubsection*{Algorithm complexity} We analyze the computational time complexity of a single iteration within the sampling-based algorithm. Given $T$ time steps as the planning horizon, $N$ agents, and $M$ samples for each agent, the complexity of computing the weights for one agent's mixed strategy is $O(TMN)$. For each iteration of the Bayesian update scheme, the weights need to be computed for all the agents, which leads to an overall complexity of $O(TMN^2)$ for one iteration.  Note that, given $N$ agents, $O(N^2)$ is the minimal complexity required for reasoning over all possible two-agent interaction pairs. The computation complexity of our algorithm with respect to the number of agents is lower than the state-of-the-art dynamic game solver ALGAMES~\citep{lecleach_algames_2022}, which has a computation complexity of $O(TN^3)$.

\section{Mixed strategy Nash equilibrium for crowd navigation} \label{sec:optimality_analysis}

We now show that Algorithm~\ref{algo:multi_agent_update} guarantees the recovery of the global Nash equilibrium of a mixed strategy game. Furthermore, the converged Nash equilibrium guarantees a lower-bounded reduction of the joint expected collision risk among all agents. 

\subsection{Preliminaries on game theory}

\begin{definition}{(Pure strategy game)}
    Given the strategy space $\mathcal{S}$, in a pure strategy general-sum game, each agent optimizes an individual objective function that depends on other agents' (pure) strategies:
    \begin{align}
        s_i^* = \argmin_{s_i} f_i(s_1, \dots, s_i, \dots, s_N). 
    \end{align} 
\end{definition}

Based on the above definition, it is clear that each agent cannot optimize their individual objective function in isolation from other agents. Thus, the conventional optimality criteria for single-agent optimal decision-making no longer apply. Instead, Nash equilibrium~\citep{nash_equilibrium_1950,nash_non-cooperative_1951} is proposed to describe an equilibrium state where each agent's strategy is optimal with respect to all other agents' current strategies.
\begin{definition}{(Global pure strategy Nash equilibrium)}
    A set of (pure) strategies from all agents, denoted as $(s_1^*, \dots, s_N^*)$, reach the global Nash equilibrium if and only if the following equality holds for all agents:
    \begin{align}
        s_i^* = \argmin_{s_i} f_i(s_1^*, \dots, s_i, \dots, s_N^*), \quad \forall i\in\{1,\dots,N\}. \label{eq:pure_nash_def}
    \end{align}
\end{definition}

\begin{remark}
    A Nash equilibrium is local, as opposed to global, when (\ref{eq:pure_nash_def}) only holds for a local region within the strategy space. 
\end{remark}

Pure strategy describes the \emph{deterministic} decisions of an agent. When decisions are uncertain, the game formula can be extended to mixed strategies.

\begin{definition}{(Mixed strategy game)}
    Given the strategy space $\mathcal{S}$, in a mixed strategy general-sum game, each agent optimizes an individual objective function that depends on other agents' mixed strategies:
    \begin{align}
        p_i^*(s) = \argmin_{p_i} J_i(p_1, \dots, p_i, \dots, p_N),
    \end{align} where each mixed strategy $p_i(s)$ is a probability distribution over the strategy space $\mathcal{S}$. 
\end{definition}

\begin{remark}
    In practice, the individual objective function of a mixed strategy game is often, but not necessarily, formulated as the expected value of the pure strategy objective with respect to the mixed strategies:
    \begin{align}
        J_i(p_1, \dots, p_N) = \mathbb{E}_{p_1, \dots, p_N}[f_i(s_1, \dots, s_N)].
    \end{align}
\end{remark}

Similarly, the definition of Nash equilibrium can also be extended to mixed strategies: 
\begin{definition}{(Global mixed strategy Nash equilibrium)}
    A set of mixed strategies from all agents, denoted as $(p_1^*(s), \dots, p_N^*(s))$, reach the global Nash equilibrium if and only if the following equality holds for all agents:
    \begin{align}
        p_i^*(s) = \argmin_{p_i} J_i(p_1^*, \dots, p_i, \dots, p_N^*), \quad \forall i\in\{1,\dots,N\}. \label{eq:mixed_ne_def}
    \end{align} 
\end{definition}

Based on the above definitions, we define the following $N$-player mixed strategy game for crowd navigation. The objective of player $i$ is:
\begin{align}
    \color{black} J_i(p_1,\cdots,p_{N}) & \color{black} = \mathbb{E}_{p_i,p_{/i}}[r] + D(p_i\Vert p^\prime_i) \label{eq:general_sum_player_definition} \\
    \color{black} \mathbb{E}_{p_i,p_{/i}}[r] & \color{black} = \sum_{j\in\mathcal{I}/i} \mathbb{E}_{p_i,p_j}[r]
\end{align} where $D(\cdot\Vert\cdot)$ is the Kullback-Leibler(KL)-divergence between two distributions, $\mathcal{I}$ is the set of all agent indices, and $\mathcal{I}/i$ is the set of indices excluding index $i$. We assume the nominal mixed strategies $p^\prime_i$ are given a priori and accessible by all players.

We will now show that Algorithm~\ref{algo:multi_agent_update} is guaranteed to converge to the global mixed strategy Nash equilibrium of this game.

\subsection{Mixed strategy Nash equilibrium for crowd navigation}

\begin{theorem} \label{theorem:nash_equilibrium}
    The sequence of mixed strategies $\{(p_1^{[k]},\dots,p_N^{[k]})\}_k$ in Algorithm~\ref{algo:multi_agent_update} converges to a limit point $(p_1^*,\dots,p_N^*)$ such that: 
    \begin{align}
        & \color{black} \forall \epsilon>0, \exists K\in\mathbb{N}, \nonumber \\
        & \color{black} \text{s.t. } \Big\vert F\left(p_1^{[k]},\dots,p_N^{[k]}\right)  - F\left(p_1^*,\dots,p_N^*\right) \Big\vert < \epsilon, \forall k>K, \label{eq:convergence_measure} \\
        & \color{black} F(p_1,\dots,p_N) = \sum_{i=1}^{N}\sum_{j=i+1}^{N} \mathbb{E}_{p_i,p_j}[r] + \sum_{i=1}^{N} D(p_{i}\Vert p^\prime_{i}).
    \end{align} The limit point is the global Nash equilibrium (\ref{eq:mixed_ne_def}) of the mixed strategy game (\ref{eq:general_sum_player_definition}).
\end{theorem}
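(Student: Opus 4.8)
The plan is to recognize the BRNE iteration as exact block coordinate descent on the function $F$, in which each single-agent update is a \emph{global} best response, and then to combine the monotone convergence of $F$ with a fixed-point argument to identify the limit as a Nash equilibrium.

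First I would show that the Bayesian posterior update is the global best response of one agent. Writing player $i$'s coupling term as $\mathbb{E}_{p_i,p_{/i}}[r]=\int_{\mathcal{S}} p_i(s)\,\mathbb{E}_{p_{/i}}[r](s)\,ds$ and letting $q_i(s)=\eta\,p_i^\prime(s)\exp\big({-}\mathbb{E}_{p_{/i}}[r](s)\big)$ be the normalized posterior of (\ref{eq:multiagent_posterior}), a direct manipulation of the integrals gives
\begin{align}
J_i(p_i,p_{/i}) = D(p_i\Vert q_i) + c(p_{/i}),
\end{align}
where $c(p_{/i})$ collects the normalization constant and does not depend on $p_i$. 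Since KL divergence is nonnegative and equals zero only when its two arguments coincide, $J_i(\cdot,p_{/i})$ is minimized uniquely at $p_i=q_i$. Thus the posterior step in Algorithm~\ref{algo:multi_agent_update} computes exactly $\argmin_{p_i} J_i(p_i,p_{/i})$, the global best response; this is also where the inverse-exponential form is essential, since it is precisely the Gibbs form that makes the KL-regularized cost collapse to a single divergence.

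Second I would verify that $F$ is an \emph{exact potential} for the game (\ref{eq:general_sum_player_definition}): for fixed $p_{/i}$, the only terms of $F$ that involve $p_i$ are the pairwise risks $\sum_{j\neq i}\mathbb{E}_{p_i,p_j}[r]$ and the divergence $D(p_i\Vert p_i^\prime)$, whose sum is precisely $J_i(p_i,p_{/i})$, while every other term is constant in $p_i$. Hence $\argmin_{p_i} F = \argmin_{p_i} J_i$, so each update in Algorithm~\ref{algo:multi_agent_update} is a coordinate-wise minimization of $F$ carried out in Gauss--Seidel order (agents $j<i$ already refreshed, $j>i$ not yet). Because each block minimization can only lower $F$, and because $F$ is bounded below (both $\mathbb{E}_{p_i,p_j}[r]\ge 0$ and $D(p_i\Vert p_i^\prime)\ge 0$), the nonincreasing sequence $\{F(p_1^{[k]},\dots,p_N^{[k]})\}_k$ converges, which is exactly the statement (\ref{eq:convergence_measure}).

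The hard part will be upgrading this value-convergence to the asserted convergence of the \emph{strategies} and identifying the limit with the Nash equilibrium (\ref{eq:mixed_ne_def}). My approach would be to invoke compactness of the mixed-strategy space $\mathcal{P}$ (in the weak topology, using boundedness and continuity of $r$) to extract a convergent subsequence with limit $(p_1^*,\dots,p_N^*)$, and continuity of the best-response map $p_{/i}\mapsto q_i$ (from the exponential form via dominated convergence) to argue that the vanishing per-sweep decrease of $F$ forces this limit to be a fixed point of every block update, i.e.\ $p_i^*=\argmin_{p_i} J_i(p_i,p_{/i}^*)$ for all $i$. By the best-response characterization above, this is precisely the global Nash equilibrium condition. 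The remaining delicate point is promoting subsequential convergence to convergence of the whole sequence; I would handle this using the uniqueness of each per-agent minimizer (strict convexity of $J_i$ in $p_i$) together with the descent structure of the potential, which rules out the iterates oscillating between distinct limit points at equal $F$-value.
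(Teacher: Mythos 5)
Your proposal is correct and shares the paper's overall architecture (each Bayesian update is a global best response; the iteration is coordinate descent on the potential $F$; monotone decrease plus a lower bound gives convergence; the limit is then identified as a Nash equilibrium), but it differs in two substantive ways. First, for the key lemma you prove best-response optimality by completing the KL divergence, i.e.\ writing $J_i(p_i,p_{/i}) = D(p_i\Vert q_i) + c(p_{/i})$ with $q_i$ the normalized Gibbs posterior; the paper instead treats it as an isoperimetric problem and solves the Euler--Lagrange condition with a Lagrange multiplier (Theorems~\ref{theorem:twoagent_posterior_optimality} and \ref{theorem:multiagent_posterior_optimality}). Your route is shorter and buys uniqueness of the minimizer for free from strict positivity of the KL divergence, whereas the paper only argues the extremum is a global minimum because the objective is unbounded above. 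Second, and more importantly, you correctly flag the passage from convergence of the $F$-values to Nash optimality of the limit as the delicate step, and you propose the standard machinery (weak compactness of $\mathcal{P}$, continuity of the best-response map, uniqueness of per-block minimizers) to close it. The paper is considerably more casual here: its formal convergence claim (\ref{eq:convergence_measure}) is only about the scalar sequence $F(p_1^{[k]},\dots,p_N^{[k]})$, and its identification of the limit as a global Nash equilibrium is a two-line contradiction argument ("if some agent had a strictly better response, the point would not be a limit point") that implicitly relies on exactly the continuity and subsequential-limit arguments you make explicit. In that sense your proposal is not merely an alternative proof but a more complete one; the only caution is that the compactness and continuity hypotheses you invoke (boundedness and continuity of $r$, weak compactness of the mixed-strategy space) are assumptions the paper never states, so you should surface them as explicit conditions rather than treat them as automatic.
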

\begin{proof}
    We leave details of the full proof in the appendix. The proof depends on the following theorem, which reveals another important property of the Bayesian update scheme.
\end{proof}

\begin{theorem} \label{theorem:twoagent_posterior_optimality}
    The Bayesian posterior in the two-agent iterative Bayesian update process (\ref{eq:twoagent_posterior}) is the global minimum of the following optimization problem:
    \begin{align}
        \eta \cdot p_i^\prime(s) \cdot \exp&\Big({-}\mathbb{E}_{p_j}[r](s)\Big) \nonumber \\
        & = \argmin_{p} \mathbb{E}_{p,p_j}[r] + D(p\Vert p_i^\prime) \\
        s.t. & \int_{\mathcal{S}} p(s) ds = 1 . \label{eq:equation_constraint}
    \end{align} 
\end{theorem}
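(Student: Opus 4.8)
The plan is to recognize the right-hand side as a convex variational problem over the mixed-strategy simplex and to solve it explicitly via the method of Lagrange multipliers, then verify that the unique stationary point coincides with the claimed Bayesian posterior. Fix the opponent strategy $p_j$ and treat the objective
\begin{align}
    \mathcal{L}[p] = \mathbb{E}_{p,p_j}[r] + D(p\Vert p_i^\prime) \nonumber
\end{align}
as a functional of the single unknown density $p$. Observe that $\mathbb{E}_{p,p_j}[r] = \int_{\mathcal{S}} p(s)\, \mathbb{E}_{p_j}[r](s)\, ds$ is \emph{linear} in $p$ (using the definition of joint expected collision risk and Fubini), while $D(p\Vert p_i^\prime) = \int_{\mathcal{S}} p(s)\log\frac{p(s)}{p_i^\prime(s)}\,ds$ is strictly convex in $p$. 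Hence $\mathcal{L}$ is strictly convex, so any stationary point subject to the normalization constraint (\ref{eq:equation_constraint}) is the unique global minimum, which is exactly the kind of uniqueness the statement asserts.

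The key steps, in order, are as follows. First I would form the Lagrangian $\mathcal{L}[p] + \lambda\left(\int_{\mathcal{S}} p(s)\,ds - 1\right)$ and take its first variation (functional derivative) with respect to $p$. The linear term contributes $\mathbb{E}_{p_j}[r](s)$, the KL term contributes $\log\frac{p(s)}{p_i^\prime(s)} + 1$, and the constraint contributes $\lambda$. Setting the variation to zero pointwise gives
\begin{align}
    \mathbb{E}_{p_j}[r](s) + \log\frac{p(s)}{p_i^\prime(s)} + 1 + \lambda = 0, \nonumber
\end{align}
which I would solve for $p(s)$ to obtain $p(s) = p_i^\prime(s)\exp\!\big({-}\mathbb{E}_{p_j}[r](s)\big)\exp(-1-\lambda)$. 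The factor $\exp(-1-\lambda)$ is a constant independent of $s$; absorbing it into a normalizer $\eta$ determined by the constraint $\int_{\mathcal{S}} p(s)\,ds = 1$ reproduces exactly the posterior in (\ref{eq:twoagent_posterior}). This is also where the ``inverse exponential'' form of the likelihood (\ref{eq:conditional_likelihood}) pays off: it is precisely the functional form that makes the stationarity condition solvable in closed form and nonnegative, so that the minimizer is automatically a valid density in $\mathcal{P}$.

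The last step is to confirm global optimality rather than mere stationarity. Because $\mathcal{L}$ is strictly convex on the affine constraint set and the constraint is linear, the first-order condition is both necessary and sufficient, so the stationary point is the unique global minimizer; I would state this via convexity rather than checking a second variation. The main obstacle I anticipate is technical rather than conceptual: justifying the interchange of differentiation and integration in taking the functional derivative, ensuring the nonnegativity constraint $p(s)\ge 0$ is not active (it is automatically satisfied since the closed-form solution is a product of nonnegative factors, so the sign constraint never binds and can be dropped from the Lagrangian), and guaranteeing the normalizer $\eta$ is finite so the minimizer genuinely lies in $\mathcal{P}$. Addressing finiteness of $\eta$ requires only that $r\ge 0$ (so the exponential is bounded by $1$) and that $p_i^\prime$ is a genuine density, both of which hold by Definition~\ref{def:risk_function} and the definition of a mixed strategy.
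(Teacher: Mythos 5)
Your proposal is correct and follows essentially the same route as the paper: both form the Lagrangian for the normalization-constrained variational problem, derive the identical pointwise stationarity condition, and recover the normalizer $\eta$ from the constraint. The one substantive difference is the justification of global (rather than merely stationary) optimality --- you invoke strict convexity of the objective (a linear risk term plus the strictly convex KL divergence), whereas the paper argues only that the objective is unbounded from above so the unique extremum must be the minimum; your convexity argument is the tighter of the two.
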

\begin{proof}
    We can expand the objective function:
    \begin{align}
        & \mathbb{E}_{p,p_j}[r] + D(p\Vert p^\prime_i) \\
        & = \int_{\mathcal{S}} \mathbb{E}_{p_j}[r](s) p(s) ds + \int_{\mathcal{S}} p(s) log\left( \frac{p(s)}{p^\prime_i(s)} \right) ds .\label{eq:twoagent_bayesian_objective1}
    \end{align}
    This optimization problem can be solved as an isoperimetric problem with a subsidiary condition, the Lagrangian of which is:
    \begin{align}
        & \mathcal{L}(p, \lambda) \nonumber \\
        & = D(p\Vert p^\prime_i) {+} \int_{\mathcal{S}} p(s) \mathbb{E}_{p_j}[r](s) ds {+} \lambda \left(\int_{\mathcal{S}} p(s) ds {-} 1\right) \nonumber,
    \end{align} where $\lambda\in\mathbb{R}$ is the Lagrange multiplier. Based on the Lagrangian, the necessary condition for $p^*(s)$ to be an extreme is (Theorem 1, Page 43~\citep{gelfand_calculus_2000}):
    \begin{align}
        \frac{\partial \mathcal{L}}{\partial p}(p^*,\lambda)  & = \log p^*(s) + 1 - \log p^\prime_i(s) + \mathbb{E}_{p_j}[r](s) - \lambda \nonumber \\
        & = 0 \nonumber \\
        p^*(s) & = p^\prime_i(s) \cdot \exp\Big({-}\mathbb{E}_{p_j}[r](s) + \lambda - 1\Big) \nonumber \\
        & = p^\prime_i(s) \cdot \exp\Big({-}\mathbb{E}_{p_j}[r](s)\Big) \cdot \exp(\lambda-1). \label{eq:ibr_proof_temp1}
    \end{align} By substituting (\ref{eq:ibr_proof_temp1}) to the equality constraint (\ref{eq:equation_constraint}), we have:
    \begin{align}
        \int_{\mathcal{S}} p^*(s) ds & = \int_{\mathcal{S}} p^\prime_i(s) \exp\Big({-}\mathbb{E}_{p_j}[r](s) + \lambda - 1\Big) ds \nonumber \\
        & = \exp\left(\lambda - 1\right) \int_{\mathcal{S}} p^\prime_i(s)\exp\Big(-\mathbb{E}_{p_j}[r](s)\Big)ds \nonumber \\
        & = 1 \nonumber \\
        \exp\left(\lambda - 1\right) & = \frac{1}{\int_{\mathcal{S}} p^\prime_i(s) \exp\left(-\mathbb{E}_{p_j}[r](s)\right) ds}. \label{eq:ibr_proof_temp2}
    \end{align} Substituting (\ref{eq:ibr_proof_temp2}) back to (\ref{eq:ibr_proof_temp1}) gives us:
    \begin{align}
        p^*(s) & = \frac{p^\prime_i(s) \exp\left(-\mathbb{E}_{p_j}[r](s)\right)}{\int_{\mathcal{S}}p^\prime_i(s) \exp\left(-\mathbb{E}_{p_j}[r](s)\right)ds} \\
        & = \eta \cdot p^\prime_i(s) \cdot z(s\vert p_j).
    \end{align} Since the objective function (\ref{eq:twoagent_bayesian_objective1}) is unbounded from above, this extremum $p^*(s)$ is the global minimum of the problem. \qed
\end{proof}

\begin{remark}
    The exponential-based weight update scheme, as well as its mathematical optimality, are not unique to crowd navigation and are not restricted to the formula in this paper. For example, in \citet{hoeven_many_2018} the same weight update scheme is discussed in the context of online learning.
\end{remark}

Theorem~\ref{theorem:twoagent_posterior_optimality} can be extended to the multi-agent Bayesian update formula (\ref{eq:multiagent_posterior}).

\begin{theorem} \label{theorem:multiagent_posterior_optimality}
    The Bayesian posterior of the multi-agent Bayesian update formula (\ref{eq:multiagent_posterior}) is the global minimum of the following optimization problem:
    \begin{align}
        \eta \cdot p_i^\prime(s) \cdot \exp&\Big({-}\mathbb{E}_{p_{/i}}[r](s)\Big) \nonumber \\
        & = \argmin_{p} \mathbb{E}_{p,p_{/i}}[r] + D(p\Vert p_i^\prime) \\
        s.t. & \int_{\mathcal{S}} p(s) ds = 1 .
    \end{align} 
\end{theorem}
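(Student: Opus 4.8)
The plan is to reduce Theorem~\ref{theorem:multiagent_posterior_optimality} to Theorem~\ref{theorem:twoagent_posterior_optimality} by recognizing that the multi-agent case has exactly the same mathematical structure as the two-agent case, with the single opponent strategy $p_j$ replaced by the aggregate term $p_{/i}$. The key observation is that the multi-agent expected collision risk $\mathbb{E}_{p_{/i}}[r](s) = \sum_{j\in\mathcal{I}/i} \mathbb{E}_{p_j}[r](s)$ is, like the two-agent $\mathbb{E}_{p_j}[r](s)$, simply a fixed nonnegative function of $s$ once all the other agents' strategies are held constant. Since neither the objective nor the constraint in the optimization problem depends on $p_{/i}$ having any particular internal structure---only on the scalar field $s\mapsto\mathbb{E}_{p_{/i}}[r](s)$---the entire derivation carries over verbatim.

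Concretely, I would first expand the objective function in the same way as in the proof of Theorem~\ref{theorem:twoagent_posterior_optimality}:
\begin{align}
    \mathbb{E}_{p,p_{/i}}[r] + D(p\Vert p^\prime_i) = \int_{\mathcal{S}} \mathbb{E}_{p_{/i}}[r](s)\, p(s)\, ds + \int_{\mathcal{S}} p(s) \log\!\left( \frac{p(s)}{p^\prime_i(s)} \right) ds , \nonumber
\end{align}
using the definition $\mathbb{E}_{p,p_{/i}}[r] = \sum_{j\in\mathcal{I}/i}\mathbb{E}_{p,p_j}[r]$ together with linearity of expectation to move the sum inside the integral and identify $\sum_{j}\mathbb{E}_{p_j}[r](s) = \mathbb{E}_{p_{/i}}[r](s)$. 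Next I would form the same isoperimetric Lagrangian with a single multiplier $\lambda$ for the normalization constraint, apply the Euler--Lagrange necessary condition (Theorem 1, Page 43~\citep{gelfand_calculus_2000}) to obtain $p^*(s) = p^\prime_i(s)\exp({-}\mathbb{E}_{p_{/i}}[r](s))\exp(\lambda-1)$, and solve for $\lambda$ using the constraint to recover the normalization term $\eta$. Finally, I would invoke the same unboundedness-from-above argument to conclude the extremum is in fact the global minimum.

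The only point requiring slight care is verifying that the substitution is legitimate at the level of the integrands, namely that $\mathbb{E}_{p,p_{/i}}[r]$ really does linearize into $\int \mathbb{E}_{p_{/i}}[r](s)\,p(s)\,ds$ with $\mathbb{E}_{p_{/i}}[r](s)$ playing precisely the role that $\mathbb{E}_{p_j}[r](s)$ played before. This follows immediately from the additive definitions of the joint expected risk and the multi-agent expected risk, but it is the one step where the multi-agent structure enters; everything downstream is formally identical. I do not anticipate any genuine obstacle here---the result is essentially a corollary of Theorem~\ref{theorem:twoagent_posterior_optimality} obtained by the substitution $p_j \mapsto p_{/i}$, and the proof could reasonably be stated as ``the proof is identical to that of Theorem~\ref{theorem:twoagent_posterior_optimality} with $\mathbb{E}_{p_j}[r](s)$ replaced by $\mathbb{E}_{p_{/i}}[r](s)$.''
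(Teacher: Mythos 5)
Your proposal is correct and matches the paper's proof exactly: the paper also proves this theorem by replacing $\mathbb{E}_{p_j}[r](s)$ with $\mathbb{E}_{p_{/i}}[r](s)$ in the argument for Theorem~\ref{theorem:twoagent_posterior_optimality}, relying on the fact that the additive definition of $\mathbb{E}_{p_{/i}}[r](s)$ makes it a fixed function of $s$ so the Lagrangian derivation carries over verbatim. Your extra care in checking that $\mathbb{E}_{p,p_{/i}}[r]$ linearizes into $\int \mathbb{E}_{p_{/i}}[r](s)\,p(s)\,ds$ is a welcome bit of rigor the paper leaves implicit.
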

\begin{proof}
    Follow the same proof of Theorem~\ref{theorem:twoagent_posterior_optimality}, with the function $\mathbb{E}_{p_{j}}[r](s)$ being replaced by $\mathbb{E}_{p_{/i}}[r](s)$. \qed
\end{proof}

\begin{remark}
    The individual objective function in our game formula (\ref{eq:general_sum_player_definition}) is the summation of two sub-objectives representing the two most important behavioral properties in crowd navigation: collision avoidance and goal-orientated behavior. The first sub-objective $\mathbb{E}_{p,p_{/i}}[r]$ evaluates the expected risk of collision between agents. On the other hand, minimizing the second objective---the KL-divergence with respect to the nominal mixed strategy---preserves the agent's nominal intent during collision avoidance.
\end{remark}

\begin{remark}
    Theorem~\ref{theorem:multiagent_posterior_optimality} indicates that each Bayesian posterior (\ref{eq:multiagent_posterior}) is the agent's optimal response to the rest of the agents' current mixed strategies. This means Algorithm~\ref{algo:multi_agent_update} is equivalent to the commonly used iterative best response (IBR) scheme in game theory. However, IBR does not guarantee convergence, nor does it recover the global Nash equilibrium, while our iterative scheme is guaranteed to recover a global mixed strategy Nash equilibrium. 
\end{remark}

In addition to the guaranteed convergence, we show that the converged mixed strategies also guarantee a lower-bounded reduction in the expected collision risk.

\begin{theorem} \label{theorem:risk_reduction}
    The converged mixed strategies $(p_1^*,\dots,p_N^*)$ from Algorithm~\ref{algo:multi_agent_update} has a lower joint expected collision risk, compared to the nominal mixed strategies. Furthermore, the reduction in the expected collision risk is lower-bounded by the summation of KL-divergence between each agent's converged mixed strategy and the nominal mixed strategy:
    \begin{align}
        & \sum_{i=1}^{N}\sum_{j=i+1}^{N} \mathbb{E}_{p_i^\prime, p_j^\prime}[r] - \sum_{i=1}^{N}\sum_{j=i+1}^{N} \mathbb{E}_{p_i^*, p_j^*}[r] \geq \sum_{i=1}^{N} D(p_i^*\Vert p_i^\prime).
    \end{align}
\end{theorem}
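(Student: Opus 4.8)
The plan is to exploit the fact that the function $F$ appearing in Theorem~\ref{theorem:nash_equilibrium} is an \emph{exact potential} for the game (\ref{eq:general_sum_player_definition}), together with the fact that Algorithm~\ref{algo:multi_agent_update} is initialized exactly at the nominal strategies. First I would observe that, holding all opponents $p_{/i}$ fixed, the individual objective $J_i$ and the potential $F$ share the same dependence on $p_i$. Using the symmetry of the joint expected collision risk, $\mathbb{E}_{p_i,p_j}[r]=\mathbb{E}_{p_j,p_i}[r]$, the pairwise terms of $F$ that contain agent $i$'s strategy are precisely $\sum_{j\neq i}\mathbb{E}_{p_i,p_j}[r]$, and the divergence term that contains it is $D(p_i\Vert p_i')$; these are exactly the two summands of $J_i$. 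Hence the best-response update of agent $i$ — which by Theorem~\ref{theorem:multiagent_posterior_optimality} is the global minimizer of $J_i$ over $p_i$ — is simultaneously a coordinate-wise global minimizer of $F$. Since the pre-update strategy is itself a feasible candidate for that minimization, the sub-step cannot increase $F$.

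Next I would assemble the monotonicity argument. Because every single-agent sub-step within a Gauss--Seidel sweep of Algorithm~\ref{algo:multi_agent_update} weakly decreases $F$, the scalar sequence $\{F(p_1^{[k]},\dots,p_N^{[k]})\}_k$ is non-increasing; combined with the convergence already guaranteed by Theorem~\ref{theorem:nash_equilibrium}, this gives $F(p_1^*,\dots,p_N^*)\le F(p_1^{[0]},\dots,p_N^{[0]})$. I would then evaluate both sides explicitly. At initialization $p_i^{[0]}=p_i'$, and since $D(p_i'\Vert p_i')=0$, the starting value collapses to $F(p_1',\dots,p_N')=\sum_{i<j}\mathbb{E}_{p_i',p_j'}[r]$, while at the limit $F(p_1^*,\dots,p_N^*)=\sum_{i<j}\mathbb{E}_{p_i^*,p_j^*}[r]+\sum_{i}D(p_i^*\Vert p_i')$.

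Finally, substituting these two expressions into the inequality $F(p^*)\le F(p')$ and rearranging isolates the divergence sum on one side and the risk difference on the other, which is precisely the claimed bound; the non-negativity of the KL divergence then forces the collision-risk difference to be non-negative, establishing the first assertion (lower joint expected risk) as a corollary of the bound. The main obstacle is the monotonicity step: one must verify that each individual Gauss--Seidel sub-update — which uses freshly updated strategies for $j<i$ and stale ones for $j>i$ — decreases $F$ on its own, not merely over a complete sweep. This is exactly where the potential-game identity between $J_i$ and $F$ is indispensable, and it in turn rests on the symmetry of $r$ (so the pairwise bookkeeping of $\mathbb{E}_{p_i,p_j}[r]$ across the upper-triangular sum matches the opponent sum in $J_i$) and on the divergence term entering $J_i$ and $F$ identically; were $r$ asymmetric, $F$ would cease to be an exact potential and this chain of inequalities would break.
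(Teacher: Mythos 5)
Your proposal is correct and follows essentially the same route as the paper: the paper's proof likewise invokes the monotone decrease of $F$ along the iterates (established in the appendix convergence theorem), evaluates $F$ at the initialization $p_i^{[0]}=p_i'$ where the KL terms vanish, and rearranges $F(p^*)\le F(p')$ to obtain the bound, with non-negativity of the KL divergence yielding the first assertion. Your explicit framing of $F$ as an exact potential (and the role of the symmetry of $r$) is a cleaner packaging of the bookkeeping the paper carries out term by term, but it is the same argument.
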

\begin{proof}
    See appendix. 
\end{proof}

\begin{remark}
    Our model is highly generalizable for crowd navigation. Algorithm~\ref{algo:multi_agent_update}, Theorem~\ref{theorem:nash_equilibrium} and Theorem~\ref{theorem:risk_reduction} do not depend on any assumption on the specific form of mixed strategy and collision risk function, thus are compatible with arbitrary types of mixed strategies and collision risk function. On the other hand, the game objective structure (\ref{eq:general_sum_player_definition}) is specifically for crowd navigation---minimizing the KL-divergence between the converged mixed strategy and the nominal strategy does not necessarily make sense for other game-theoretic interactions. 
\end{remark}

\subsection{Connections to stochastic control and stochastic game-theoretic methods}

Our framework consists of elements commonly found in other stochastic control and stochastic game-theoretic methods. Here, we discuss the connection between our framework and representative works from these fields. 

\subsubsection*{Information-theoretic duality} The derivation of Theorem~\ref{theorem:twoagent_posterior_optimality} and Theorem~\ref{theorem:multiagent_posterior_optimality} is equivalent to the derivation of the information-theoretic duality between free energy and relative entropy (KL-divergence) in stochastic control~\citep{theodorou_relative_2012}. In particular, if we specify the mixed strategy as a stochastic control process and specify the nominal mixed strategy as the path distribution of an \emph{uncontrolled} system, then the individual objective function (\ref{eq:general_sum_player_definition}) is the upper bound of the agent’s free energy. In this case, Theorem~\ref{theorem:twoagent_posterior_optimality} and Theorem~\ref{theorem:multiagent_posterior_optimality} prove that each BRNE iteration globally minimizes the upper bound of each agent's free energy, connecting our theoretical results to other results in~\citet{theodorou_relative_2012}.

\subsubsection*{Iterative best response and importance sampling} Our BRNE algorithm is based on two fundamental numerical techniques: iterative best response (IBR) and importance sampling. IBR is a widely used technique to efficiently approximate the Nash equilibrium of a game. Importance sampling is also widely used in stochastic control, particularly path integral control frameworks~\citep{theodorou_generalized_2010}, to find the optimal control policy. In~\citet{williams_best_2018}, a model predictive path integral control framework is combined with IBR (BR-MPPI) to find the optimal stochastic control policy for a vehicle interacting with other vehicles. While our BRNE framework and BR-MPPI share similar numerical techniques, there are two fundamental differences: (1) BR-MPPI is a general-purpose game-theoretic planner and BRNE is a game-theoretic \emph{crowd navigation} planner; (2) Benefitting from its narrow scope, BRNE guarantees the convergence to the global Nash equilibrium and has sufficient computation efficiency to interact with a relatively large number of agents in crowds, while BR-MPPI does not have these advantages when applied to crowd navigation.

\subsubsection*{Bayesian update and multimodality} The stochastic formulation of BRNE is motivated by the uncertain nature of human behavior---one main advantage of the stochastic formula is that BRNE could capture the multimodality in an agent's strategy (see Figure~\ref{fig:algo_examples} for examples). Similarly, a stochastic game-theoretic planner (MPOGames) is proposed in~\citet{so_mpogames_2023}, which explicitly computes multi-modal control policies in multi-agent interaction. The key difference between BRNE and MPOGames is the assumed origin of the multimodality in agent intents. The multimodality in MPOGames comes from the assumption that non-ego agents' objectives are only partially observable to the ego agent. As a result, the ego agent maintains a belief over possible non-ego agent objectives using the Bayesian update, which creates multimodality. On the other hand, in BRNE, all agents' objectives are assumed to be shared among all agents. The multimodality in BRNE comes from the global Nash equilibrium of a mixed strategy game recovered by BRNE, where the recovered Nash equilibrium could contain multimodal mixed strategies. Essentially, MPOGames and BRNE are making orthogonal contributions: MPOGames paves the way for extending BRNE to scenarios with incomplete information regarding agent objectives, and BRNE offers a potential solution to extend MPOGames beyond Gaussian-based policies and local approximation of Nash equilibrium.

\section{Evaluation} \label{sec:evaluation}

\begin{figure*}[ht!]
    \centering
    \includegraphics[width=\textwidth]{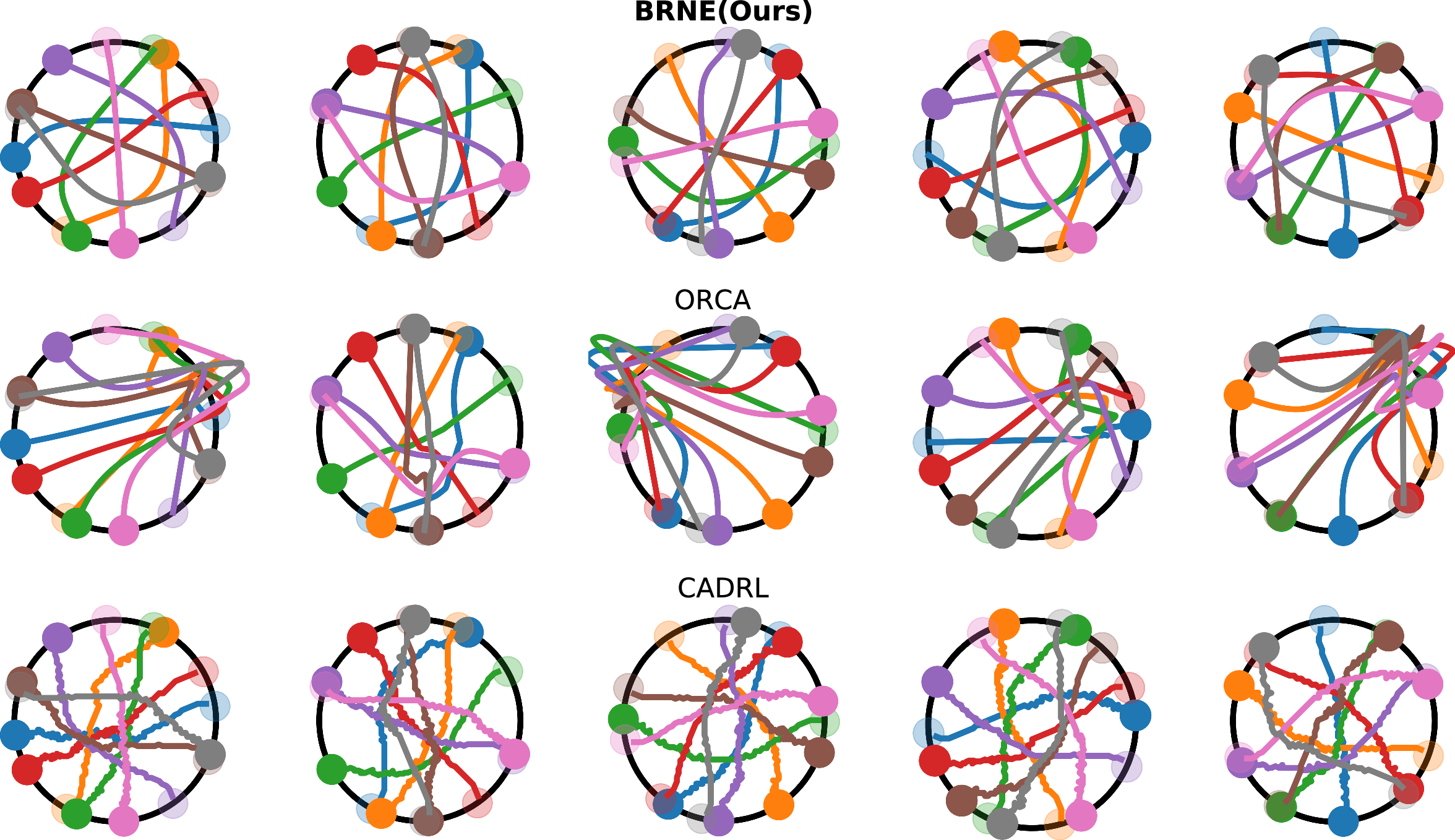}
    \captionsetup{justification=justified}
    \caption{Examples of the joint navigation strategies (8 agents) from different methods, in the multi-agent navigation experiments. }
    \label{fig:circle_example_comparison}
    \vspace{+1em}
\end{figure*}

\begin{figure*}[ht!]
    \centering
    \includegraphics[width=\textwidth]{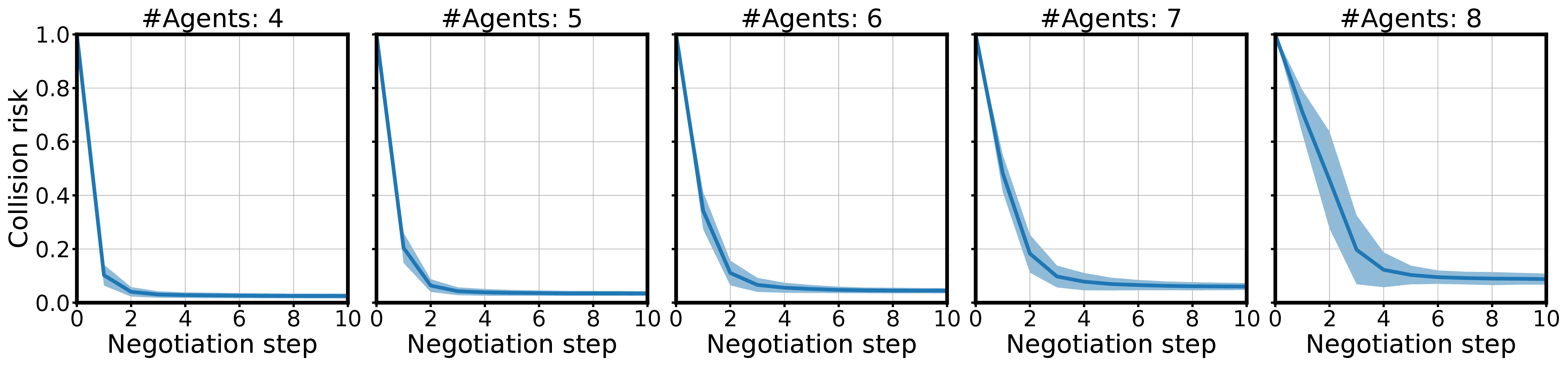}
    \captionsetup{justification=justified}
    \caption{Convergence of mixed strategy in our algorithm across different numbers of agents in the multi-agent navigation experiments.}
    \label{fig:brne_convergence}
    \vspace{+1em}
\end{figure*}

\begin{table*}[ht!] 
    \captionsetup{justification=centering}
    \caption{Safety metrics of multi-agent navigation experiments.}
    \setlength{\tabcolsep}{4.0pt}
    \begin{subtable}[t]{0.11\textwidth}
        \centering
        \begin{tabular}{|c|}
        \toprule
        \text{\#Agents} \\
        \midrule
        \textsf{BRNE} (RS) \\
        \textsf{BRNE} (IS) \\
        \textsf{ORCA} \\
        \textsf{CADRL} \\
        \bottomrule
        \end{tabular}
        \captionsetup{justification=centering}
        \caption*{}
    \end{subtable}
    \setlength{\tabcolsep}{3.0pt}
    \begin{subtable}[t]{0.32\textwidth}
        \centering
        \begin{tabular}{|ccccc|}
        \toprule
        4 & 5 & 6 & 7 & 8 \\
        \midrule
        $1\%$ & $1\%$ & $2\%$ & $4\%$ & $4\%$ \\
        $2\%$ & $3\%$ & $4\%$ & $5\%$ & $7\%$ \\
        $\bf{0\%}$ & $\bf{0\%}$ & $\bf{0\%}$ & $\bf{0\%}$ & $\bf{0\%}$ \\
        $100\%$ & $100\%$ & $100\%$ & $100\%$ & $100\%$ \\
        \bottomrule
        \end{tabular}
        \captionsetup{justification=centering}
        \caption{Collision rate ($\%$)---lower is better}
    \end{subtable}
    \setlength{\tabcolsep}{3.0pt}
    \begin{subtable}[t]{0.52\textwidth}
        \centering
        \begin{tabular}{|ccccc|}
        \toprule
        4 & 5 & 6 & 7 & 8 \\
        \midrule
        $1.20{\pm}0.18$ & $\bf{1.08{\pm}0.15}$ & $\bf{1.00{\pm}0.13}$ & $\bf{0.95{\pm}0.15}$ & $\bf{0.93{\pm}0.14}$ \\
        $\bf{1.24{\pm}0.23}$ & $1.07{\pm}0.18$ & $0.96{\pm}0.18$ & $0.87{\pm}0.15$ & $0.76{\pm}0.13$ \\
        $0.6{\pm}0.0$ & $0.6{\pm}0.0$ & $0.6{\pm}0.0$ & $0.6{\pm}0.0$ & $0.6{\pm}0.0$ \\
        $0.17{\pm}0.15$ & $0.08{\pm}0.07$ & $0.05{\pm}0.04$ & $0.04{\pm}0.03$ & $0.04{\pm}0.02$ \\
        \bottomrule
        \end{tabular}
        \captionsetup{justification=centering}
        \caption{Safety distance (m)---larger is better}
    \end{subtable}
    \label{table:multinav_safety}
\end{table*}

\begin{table*}[ht!]
    \captionsetup{justification=centering}
    \caption{Path efficiency of multi-agent navigation experiments.}
    \centering
    \setlength{\tabcolsep}{12.0pt}
    \begin{subtable}[t]{1.0\textwidth}
        \centering
        \begin{tabular}{|c|ccccc|}
        \toprule
        \text{\#Agents} & 4 & 5 & 6 & 7 & 8 \\
        \midrule
        \textsf{BRNE} (RS) & $6.44{\pm}0.06$ & $\bf{6.61{\pm}0.09}$ & $\bf{6.76{\pm}0.08}$ & $\bf{6.84{\pm}0.08}$ & $\bf{7.00{\pm}0.07}$ \\
        \textsf{BRNE} (IS) & $6.90{\pm}0.32$ & $7.06{\pm}0.34$ & $7.23{\pm}0.33$ & $7.36{\pm}0.32$ & $7.36{\pm}0.31$ \\
        \textsf{ORCA} & $\bf{6.38{\pm}0.35}$ & $6.98{\pm}1.10$ & $7.45{\pm}1.46$ & $7.22{\pm}1.38$ & $7.14{\pm}1.42$ \\
        \textsf{CADRL} & $8.96{\pm}1.12$ & $9.02{\pm}1.18$ & $8.98{\pm}0.71$ & $9.26{\pm}1.10$ & $9.03{\pm}0.53$ \\
        \bottomrule
        \end{tabular}
        \captionsetup{justification=centering}
        \caption{Maximum path length (m)---shorter is better}
    \end{subtable} \\
    \centering
    \setlength{\tabcolsep}{12.0pt}
    \label{table:multinav_efficiency}
\end{table*}

\subsection{Overview of evaluation}
We evaluate our BRNE navigation method, as well as other non-learning and learning-based crowd navigation methods, in two categories of tasks: (1) multi-agent navigation (Section~\ref{sec:multi_agent_exp}); (2) crowd navigation (Section~\ref{sec:social_nav_exp1}, Section~\ref{sec:social_nav_exp2}). In the multi-agent navigation experiment, the tested methods are asked to plan paths for all the agents, given a set of start and goal locations. The multi-agent navigation experiment represents the most ideal situation of crowd navigation, where all agents' behaviors follow the behavioral model assumed by the planning algorithm. The results of the multi-agent navigation experiment do not necessarily reflect an algorithm's crowd navigation performance with humans, but serve as an upper bound of the algorithms' ability to infer safe and efficient joint paths---if a method cannot infer safe and efficient joint paths even when all the agents follow the assumed behavioral model, then the method is unlikely to generate safe and efficient navigation actions with real humans who do not necessarily follow the assumed behavioral model. To evaluate the crowd navigation performance of the tested methods, especially when pedestrians do not follow the behavior assumed by the planning algorithm, we conduct extensive crowd navigation benchmark experiments with two different kinds of experiment designs. The first crowd navigation benchmark experiment (Section~\ref{sec:social_nav_exp1}) places all the agents in a tight space where collision avoidance is necessary for all the agents. The pedestrian behavior is simulated using a decentralized collision avoidance algorithm ORCA~\citep{van_den_berg_reciprocal_2011}. The purpose of this evaluation is to stress test the robot's ability to infer and adapt to cooperative but unknown joint navigation strategies from other agents. The second crowd navigation benchmark experiment (Section~\ref{sec:social_nav_exp2}) aims to benchmark the algorithms' performance in realistic crowd navigation scenarios. We evaluate the performance using a state-of-the-art crowd navigation benchmark~\citep{biswas_socnavbench_2022}, which consists of real-world environmental geometry and real human navigation trajectories collected from several datasets. The robot has to navigate in unstructured spaces while interacting with human pedestrians. For each experiment, we evaluate the algorithm performance from two categories of metrics: safety and navigation efficiency. Details of the specific metrics are provided in the corresponding subsections. Lastly, we implement the algorithm on an untethered quadruped robot with fully onboard perception and computation to demonstrate our algorithm's practicality with real humans. The implementation of the algorithm, including the parameters and tutorials, will be released on \href{https://sites.google.com/view/brne-crowdnav}{https://sites.google.com/view/brne-crowdnav}.

\subsection{Multi-agent navigation evaluation} \label{sec:multi_agent_exp}

\subsubsection*{Experiment design} For multi-agent navigation, we randomly place different numbers of agents on a circle with a radius of 3 meters. The goal of each agent is to reach the other side of the circle. This design means the most efficient path for each agent is to follow a straight line across the center of the circle, which makes it necessary for all agents to compromise efficiency in exchange for safety. When randomizing the initial agent locations, we uniformly sample locations on the circle, but also make sure no two agents' initial locations are within 0.6 meters of each other---we assume each agent is circular-shaped with a radius of 0.3 meters. We vary the number of agents from 4 to 8 and conduct 100 navigation trials for each number of agents. We specify the desired velocity for all agents and all baselines as $1.2 m/s$. We test our method \textsf{BRNE} with both an importance-sampling implementation and a rejection-sampling implementation, we denote the rejection-sampling implementation as \textsf{BRNE}(RS) and the importance-sampling implementation as \textsf{BRNE}(IS).

\subsubsection*{Rationale of baselines}
We select two baselines for comparison: (1) Optimal reciprocal collision avoidance (\textsf{ORCA})~\citep{van_den_berg_reciprocal_2011}; and (2) collision avoidance with deep reinforcement learning (\textsf{CADRL})~\citep{chen_socially_2017}. We choose the selected baselines partially based on their widely accessible implementations. Furthermore, we choose \textsf{ORCA} as it is the de facto non-learning decentralized collision avoidance model for simulating crowds. We choose \textsf{CADRL} as it is the de facto learning-based crowd navigation model.  

\subsubsection*{Rationale of metrics}
For safety, we measure \emph{safety distance} and \emph{collision rate}. The safety distance is defined as the closest distance between any two agents during the whole navigation task. We consider a navigation trial to involve a collision if the safety distance of the trial is below 0.6 meters, twice the body radius of each agent. For the navigation efficiency, we measure the maximum path length among all agents in each trial. 

\begin{figure}[t!]
    \centering    \includegraphics[width=0.49\textwidth]{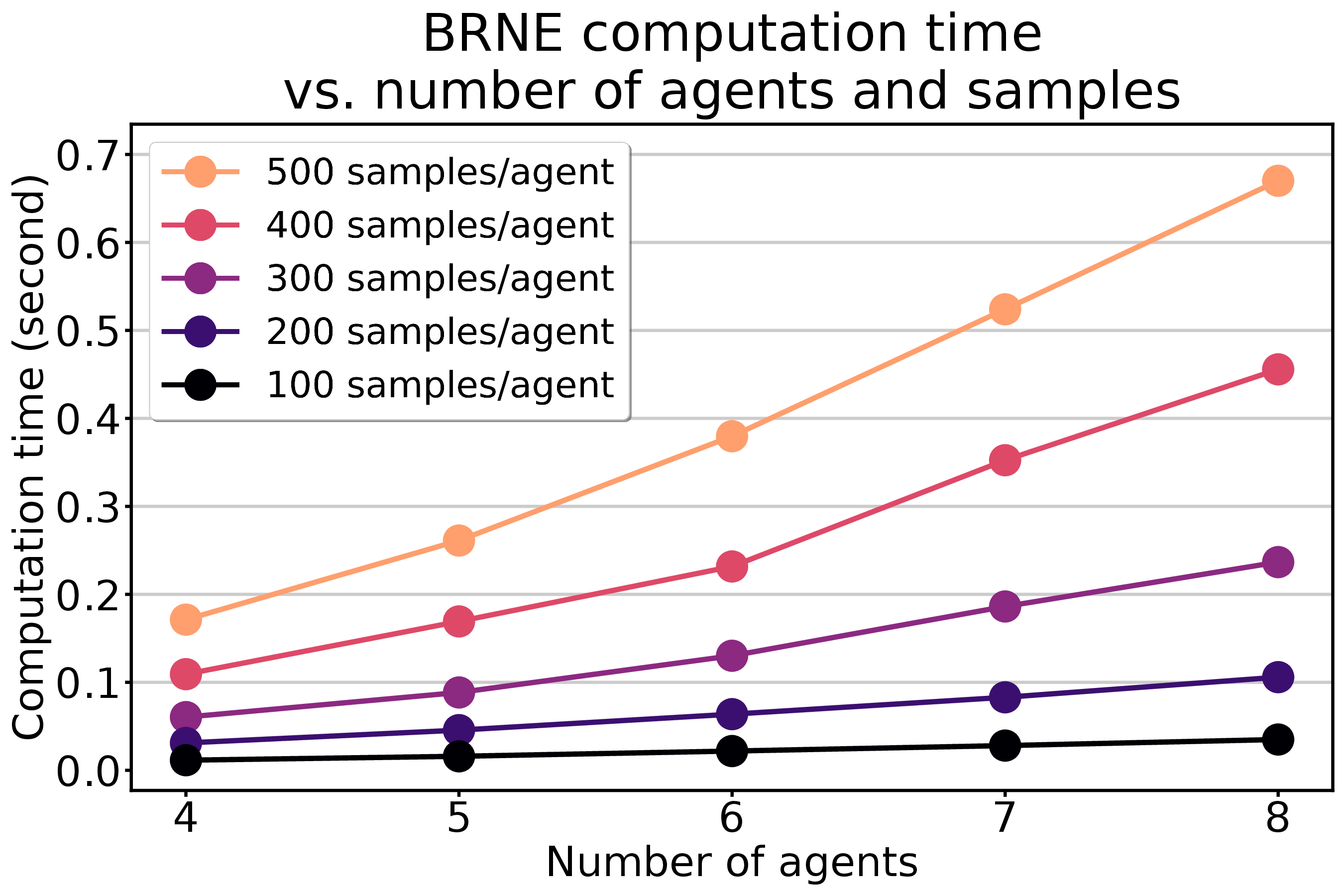}
    \captionsetup{justification=justified}
    \vspace{-2em}
    \caption{Plot of \textsf{BRNE} replanning frequencies with different numbers of agents and different numbers of samples per agent.}
    \label{fig:time_plot}
\end{figure}

\begin{table}[t!]
    \centering
    \captionsetup{justification=centering}
    \caption{Computation time of game-theoretic methods.}
    \setlength{\tabcolsep}{4.0pt}
    \begin{tabular}{|c|ccc|}
        \toprule
        Algorithm & 2 Agents & 3 Agents & 4 Agents \\
        \midrule
        \textsf{BRNE}(Ours) & $3.9{\pm} 0.1$ ms & $7.1{\pm} 0.2$ ms & $11.0{\pm}0.6$ ms \\
        \textsf{ALGAMES} & $50{\pm}11$ ms & $116{\pm}22$ ms & $509{\pm}33$ ms \\
        \textsf{iLQGames} & $752{\pm}168$ ms & $362{\pm}93$ ms & $1905{\pm}498$ ms \\
        \bottomrule
    \end{tabular}
    \label{table:speed_comparison}
\end{table}

\subsubsection*{Results} The evaluation results are presented in Table~\ref{table:multinav_safety} (safety) and Table~\ref{table:multinav_efficiency} (navigation efficiency). Note that \textsf{ORCA} guarantees collision-free paths, thus it has $0\%$ collision rate across all experiments. On the other hand, our method \textsf{BRNE} generates joint navigation strategies with larger safety distance, better efficiency, and lower disparity. Even though \textsf{BRNE} strategies are not collision-free, the collision rate is still minimal. Since the multi-agent navigation benchmark assumes all agents follow the same behavioral model, having a $0\%$ collision rate in this benchmark does not necessarily mean collision-free with humans. More importantly, as shown in Figure~\ref{fig:circle_example_comparison}, the collision-free guarantee of \textsf{ORCA} could significantly compromise the path quality, leading to highly unnatural joint strategies when agents are unevenly distributed. This indicates that, when used as a coupled prediction and planning framework on the robot, \textsf{ORCA} could plan the robot's path based on highly unrealistic expectations of human reactions. Lastly, the learning-based baseline \textsf{CADRL} performs poorly across all metrics and exhibits highly non-smooth trajectories as shown in Figure~\ref{fig:circle_example_comparison}. Note that the \textsf{CADRL} policy is trained using the toolbox provided in~\cite {chen_crowd-robot_2019} with pedestrians simulated with \textsf{ORCA}. Thus, the trained \textsf{CADRL} policy is a collision avoidance policy in crowds. We test the \textsf{CADRL} policy in this multi-agent navigation benchmark to verify if the policy is a coupled prediction and planning method, instead of a dynamic object avoidance method. 
Videos of the multi-agent navigation experiments are included in our project website: \href{https://sites.google.com/view/brne-crowdnav}{https://sites.google.com/view/brne-crowdnav}.

\begin{table*}[ht!]
    \captionsetup{justification=centering}
    \caption{Results of simulated crowd navigation experiments.}
    \setlength{\tabcolsep}{8.0pt}
    \centering
    \begin{tabular}{|c|cc|cc|}
        \toprule
        \text{Algorithm} & \text{Collision rate ($\%$)} & \text{Safety distance (m)} & \text{Time to goal (s)} & \text{Path length (m)} \\
        \midrule
        \textsf{BRNE}(Ours) & $18\%$ & $\bf{0.78{\pm}0.18}$ & $8.29\pm 0.39$ & $1.10\pm 0.09$ \\
        \textsf{NCE} & $15\%$ & $0.70\pm 0.08$ & $\bf{8.12\pm 1.91}$ & $1.15\pm 0.14$ \\
        \textsf{ORCA} & $\bf{0\%}$ & $0.6\pm 0.0$ & $10.13\pm 0.70$ & $\bf{1.02\pm 0.02}$ \\
        \textsf{CADRL} & $96\%$ & $0.41\pm 0.12$ & $24.10\pm 0.00$ & $1.29\pm 0.16$\\
        \bottomrule
    \end{tabular}
    \captionsetup{justification=centering}
    \label{table:simulated_crowd_navigation}
\end{table*}

\begin{figure*}[ht!]
    \centering
    \includegraphics[width=\textwidth]{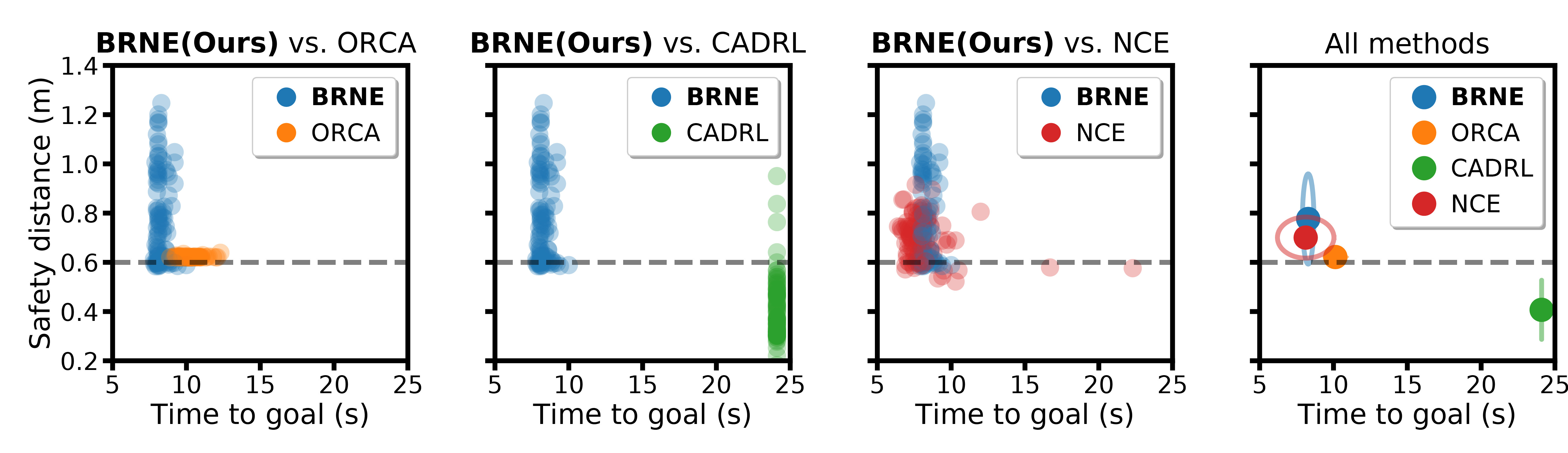}
    \captionsetup{justification=justified}
    \vspace{-2em}
    \caption{Plots of time-to-goal metric vs. safety distance metric for all the methods. From left to right, the first three figures compare our method \textsf{BRNE} and other methods by overlapping data points from all 100 trials. The fourth figure overlaps the statistics (mean $\pm$ standard deviation) of all methods.}
    \label{fig:simulated_result_plots}
\end{figure*}

\begin{figure*}[ht!]
    \centering
    \includegraphics[width=\textwidth]{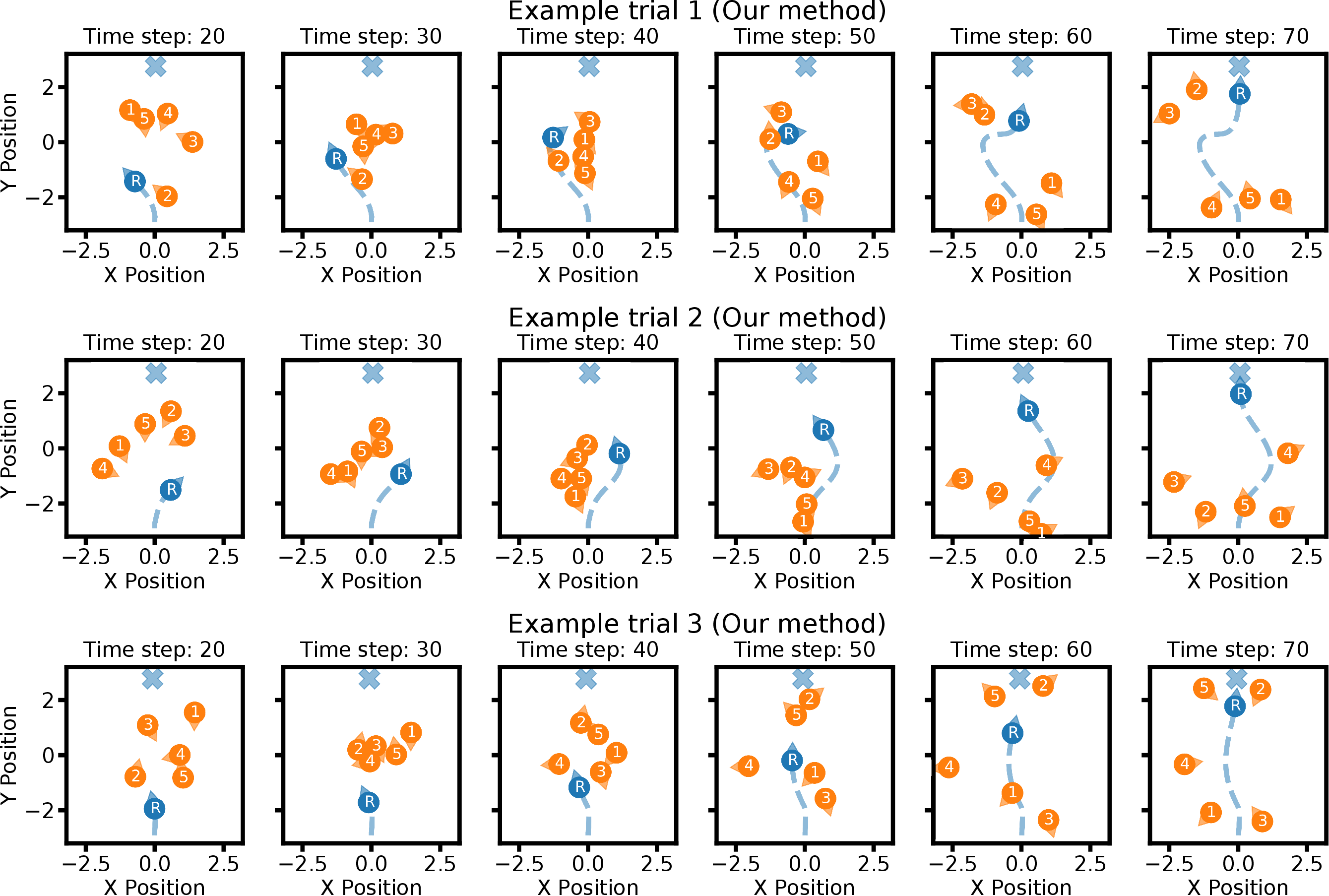}
    \captionsetup{justification=justified}
    \caption{Frames from three example trials of our method \textsf{BRNE} in the simulated crowd navigation experiments. The blue circle with the letter ``R'' represents the robot and the orange circles with numbers represent the five simulated pedestrians. The robot's task is to navigate to the goal indicated by the blue cross without running into pedestrians.}
    \label{fig:simulated_frame_plots}
\end{figure*}

\subsubsection*{Computation time of BRNE} We use the multi-agent navigation benchmark to evaluate how the number of agents and samples affect the computation time of \textsf{BRNE}. We fixed the planning horizon of \textsf{BRNE} at 50 time steps (5 seconds), then we vary the number of agents from 4 to 8 and the number of samples per agent from 100 to 500. The tested \textsf{BRNE} implementation uses Numba for parallelization and runs on a laptop with Intel Core i9-12900K Processor (20 threads). The results are shown in Figure~\ref{fig:time_plot}. We can see that when the algorithm is accelerated through parallelization in practice, the computation complexity is close to linear with respect to the number of agents. Since the scalability with the number of agents is often a computation bottleneck for game-theoretic planners, in Table~\ref{table:speed_comparison}, we further compare the computation speed of \textsf{BRNE} with two state-of-the-art dynamic game solvers, \textsf{ALGAMES}~\citep{lecleach_algames_2022} and \textsf{iLQGames}~\citep{fridovich-keil_efficient_2020}. We use 100 samples per agent for \textsf{BRNE} and use the results reported in the ``intersection'' experiment from~\citet{lecleach_algames_2022}, as it is the most similar to the crowd navigation setting in this test. We use the importance-sampling-based \textsf{BRNE} implementation throughout this test. In Table~\ref{table:speed_comparison}, we can see that BRNE is at least one order of magnitude faster than \textsf{ALGAMES} and \textsf{iLQGames} and is the only method with sufficient computation time for real-time planning with 4 agents. \textsf{ALGAMES} and \textsf{iLQGames} are general-purpose game-theoretic planners that can be applied to applications other than crowd navigation, while \textsf{BRNE} is designed specifically for crowd navigation. \textsf{BRNE} explicitly benefits from its narrow scope for better computation efficiency in dense crowds.

\subsubsection*{Convergence test} We use the multi-agent navigation benchmark to further test the convergence of the \textsf{BRNE} algorithm. In Figure~\ref{fig:brne_convergence}, we show the convergence of \textsf{BRNE} with the number of agents varying from 4 to 8. We can see the algorithm consistently converges within 10 iterations and the convergence rates are similar across different numbers of agents. We use the rejection-sampling-based implementation for the convergence test for better numerical accuracy.

\subsection{Simulated crowd navigation evaluation} \label{sec:social_nav_exp1}

\subsubsection*{Experiment design} The geometrical layout of the simulated crowd navigation experiment is similar to the multi-agent navigation experiment, where we place agents across a circle with a radius of 3 meters and each agent's task is to reach the other side of the circle. The difference in this benchmark is that we only have control over one agent (the robot), while other agents are cooperative but make independent decisions. We use \textsf{ORCA} to simulate pedestrians since it is the most commonly used decentralized collision avoidance method. We test each navigation method with $5$ simulated pedestrians. The desired velocity for each agent is set as $1.2 m/s$. The body radius of each agent is $0.3m$ and the distance threshold for a collision is $0.6m$. We randomly initialize agent locations for 100 trials, with the same randomization scheme in the multi-agent navigation experiments. In this experiment, \textsf{BRNE} is implemented with importance sampling as a model predictive planner, updating its inference of mixed strategy Nash equilibrium based on the latest observation of other agents' positions and velocities.

\begin{table*}[]
    \centering
    \captionsetup{justification=centering}
    \caption{Results of human dataset crowd navigation experiments.}
    \setlength{\tabcolsep}{10.0pt}
    \begin{tabular}{|c|cc|cc|}
        \toprule
        Algorithm & \begin{tabular}[c]{@{}c@{}}Pedestrian collisions\end{tabular} & \begin{tabular}[c]{@{}c@{}}Freezing behavior\end{tabular} & \begin{tabular}[c]{@{}c@{}}Path length (m)\end{tabular} & \begin{tabular}[c]{@{}c@{}}Time to goal (s) \end{tabular}\\
        \midrule
        \textsf{BRNE}(Ours) & $\mathbf{1}$ & $\mathbf{0}$ & $16.56\pm 3.85$ & $18.95\pm 4.81$ \\
        \textsf{Meta-planner}(Ours) & $37$ & $\mathbf{0}$ & $\mathbf{15.42\pm 3.71}$ & $17.66\pm 4.28$ \\
        \midrule
        \textsf{SF} & $\mathbf{1}$ & $1$ & $17.25\pm 4.05$ & $15.93\pm 4.17$ \\
        \textsf{ORCA} & $15$ & $1$ & $17.66\pm 5.22 $ & $22.06\pm 7.78$ \\
        \textsf{CADRL} & $40$ & $1$ & $15.70\pm 3.72$ & $\mathbf{15.14\pm 4.21}$ \\
        \textsf{Baseline} & $64$ & $1$ & $15.88\pm 3.57$ & $16.08\pm 3.73$ \\
        \bottomrule
    \end{tabular}
    \label{table:benchmark_crowd_navigation}
\end{table*}

\begin{figure*}[ht!]
    \centering
    \includegraphics[width=\textwidth]{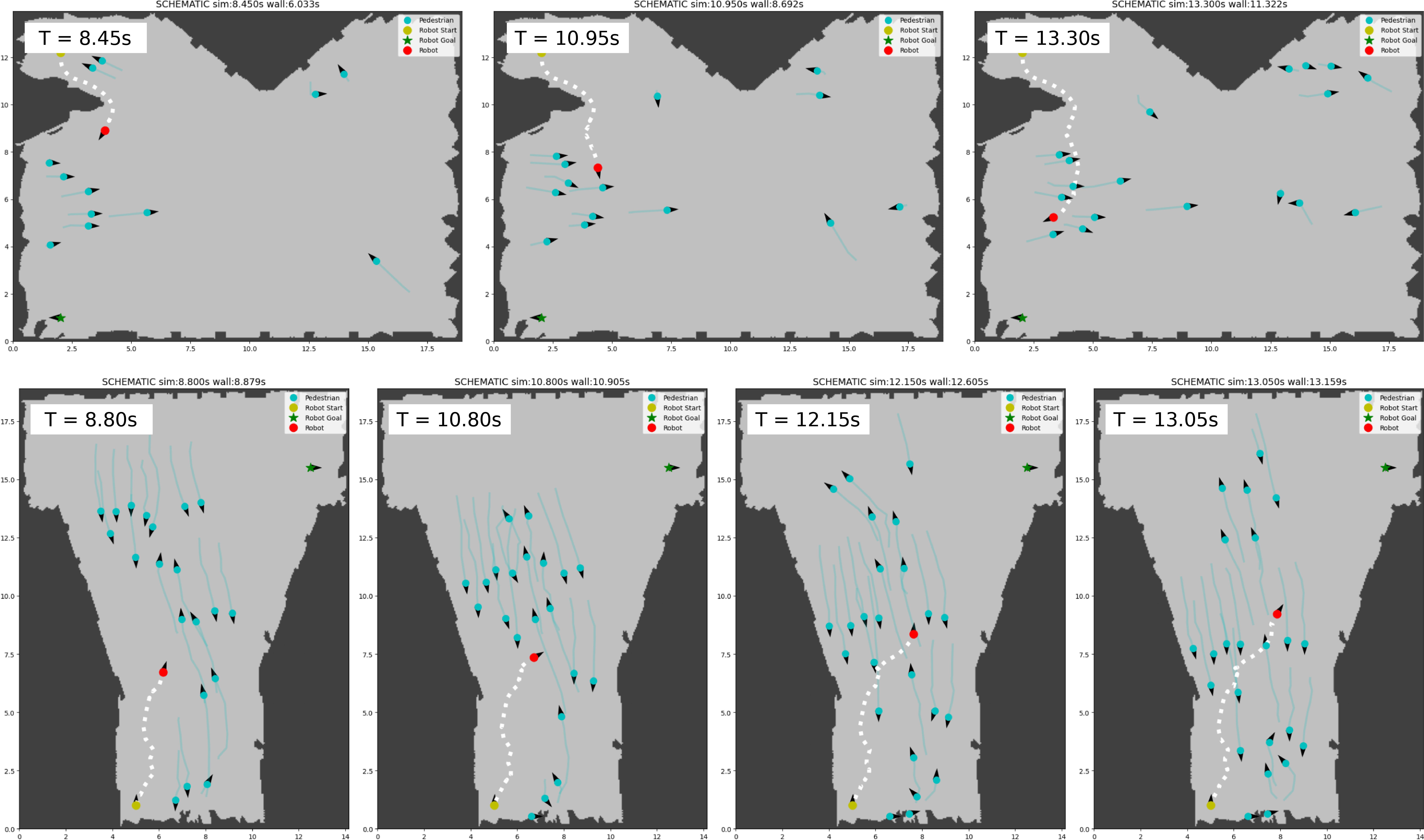}
    \captionsetup{justification=justified}
    \caption{Each row shows representative snapshots from a benchmark trial. The top row is from the UCY dataset and the bottom row is from the ETH dataset.}
    \label{fig:benchmark_frames}
\end{figure*}

\begin{figure*}[ht!]
    \centering
    \includegraphics[width=\textwidth]{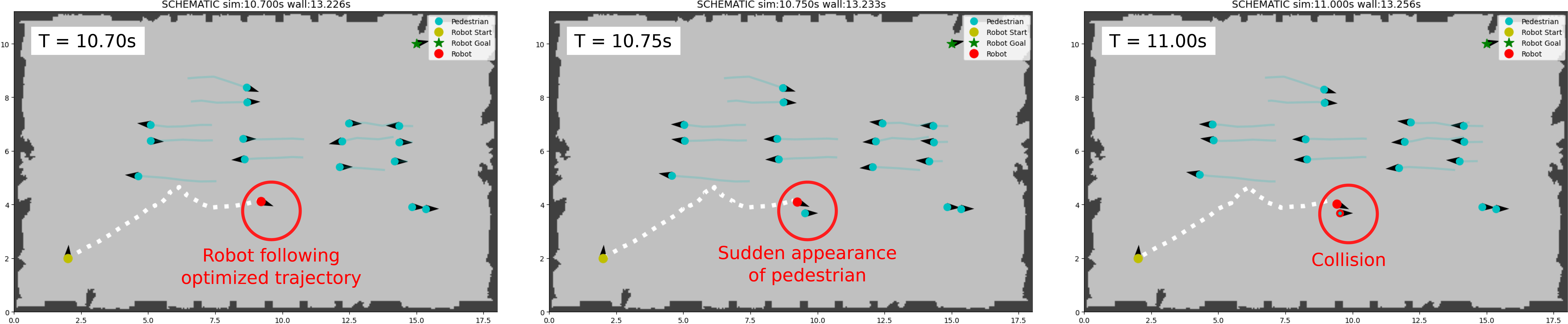}
    \captionsetup{justification=justified}
    \caption{Snapshots from the only trial with collision. The collision is caused by the sudden appearance of a pedestrian right in front of the robot, leaving the robot no space and time to react.}
    \label{fig:socnavbench_collision_trial}
    \vspace{-1em}
\end{figure*}

\subsubsection*{Rationale of baselines} We select $3$ baselines to compare with: (1) \textsf{ORCA}; (2) \textsf{CADRL}; (3) Constrastive learning social navigation (\textsf{NCE})~\citep{liu_social_2021}. We choose \textsf{ORCA} and \textsf{CADRL} for the same reason as we choose them for multi-agent navigation evaluation. We choose \textsf{NCE} since it is reported as the state-of-the-art method in simulated crowd navigation benchmarks. We train \textsf{NCE} in the environment provided by~\citet{liu_social_2021}. 

\subsubsection*{Rationale of metrics} For safety, we measure the same \emph{safety distance} and \emph{collision rate} metrics. For navigation efficiency, we measure the \emph{time-to-goal} and \emph{path length} of the robot.

\subsubsection*{Results} In Table~\ref{table:simulated_crowd_navigation} and Figure~\ref{fig:simulated_result_plots}, we show the safety and navigation efficiency results. In Figure~\ref{fig:simulated_frame_plots}, we show representative frames of our method from the simulated tests. We can see that our method \textsf{BRNE} is competitive on both navigation safety and efficiency with the state-of-the-art reinforcement learning method \textsf{NCE}. Both \textsf{NCE} and \textsf{CADRL} are tested in the same simulation environment where it is trained, while our method \textsf{BRNE} requires no training. \textsf{CADRL} performs poorly in this benchmark, exhibiting a high collision rate and low navigation efficiency. Furthermore, \textsf{CADRL} fails to reach the navigation goal and we have to terminate the trial after a certain amount of time, causing the low standard deviation in the time-to-goal metric. The performance of \textsf{CADRL} could be due to the difference between the testing environment provided by~\citet{liu_social_2021} and the training environment of \textsf{CADRL}. Videos of the simulated crowd navigation experiments are included in our project website: \href{https://sites.google.com/view/brne-crowdnav}{https://sites.google.com/view/brne-crowdnav}.

\subsection{Crowd navigation in human datasets} \label{sec:social_nav_exp2}

\subsubsection*{Experiment design} We evaluate crowd navigation with prerecorded human pedestrian behaviors within the same geometrical spaces of recordings. The evaluation is conducted in \emph{SocNavBench}~\citep{biswas_socnavbench_2022}, a state-of-the-art, high-fidelity crowd navigation benchmark framework. \emph{SocNavBench} contains human pedestrian data from two of the most commonly used pedestrian datasets ETH~\citep{pellegrini_youll_2009} and UCY~\citep{lerner_crowds_2007}. A set of 33 curated episodes are extracted from the original datasets for benchmarking, which assemble a set of highly interactive test trials. The curated episodes contain an average of 44 pedestrians per episode. The distance between the goal location and the start location for the robot ensures that the navigation task can be finished within 25 seconds if the robot does not avoid collision and navigates with the maximum permitted velocity of $1.2m/s$. An occupancy grid map of the real-world space is provided for each episode, the robot has access to the map a priori and can only navigate in the free space. Note that, even though the pedestrian agents in \emph{SocNavBench} are non-reactive given the pre-recorded human behavior, the robot agent still assumes the pedestrian agents are real-world reactive humans. We refer readers to~\citet{biswas_socnavbench_2022} for more details about the benchmark, as well as the discussion on the choice of using pre-recorded human behavior. In this experiment, \textsf{BRNE} is implemented with importance sampling as a model predictive planner, updating its inference of mixed strategy Nash equilibrium based on the latest observation of other agents' positions and velocities. The algorithm implementation for the benchmark (including paramerters) will be released on \href{https://sites.google.com/view/brne-crowdnav}{https://sites.google.com/view/brne-crowdnav}.

\subsubsection*{Rationale of baselines} We compare our method \textsf{BRNE} with the four baselines and the corresponding results reported in the original paper of \emph{SocNavBench}~\citep{biswas_socnavbench_2022}. They are (1) the social force model (\textsf{SF})~\citep{helbing_social_1995}; (2) \textsf{ORCA}; (3) \textsf{CADRL}; and (4) a pedestrian-unaware baseline provided by the benchmark with static obstacle avoidance capability (\textsf{Baseline}). Note that \textsf{ORCA} and \textsf{CADRL} do not have static obstacle avoidance capability. Thus, they use \textsf{Baseline} as a meta-planner to avoid static obstacles in the benchmark. We tried to use \textsf{Baseline} directly as the meta-planner to generate the nominal mixed strategy for \textsf{BRNE}. However, even though \textsf{Baseline} is sufficient for \textsf{BRNE} to avoid static obstacles, it is not sufficient for \textsf{BRNE} to consistently reach the goal---recall that \textsf{BRNE} requires a strong goal-oriented nominal mixed strategy. Therefore, we slightly modified \textsf{Baseline} to keep its static obstacle avoidance module but improve its goal-reaching capability. We name this modified baseline planner \textsf{Meta-Planner} and report its performance as well. We will release the implementation of both \textsf{BRNE} and \textsf{Meta-Planner} in this benchmark. 

\begin{figure*}[htbp]
    \centering
    \includegraphics[width=\textwidth]{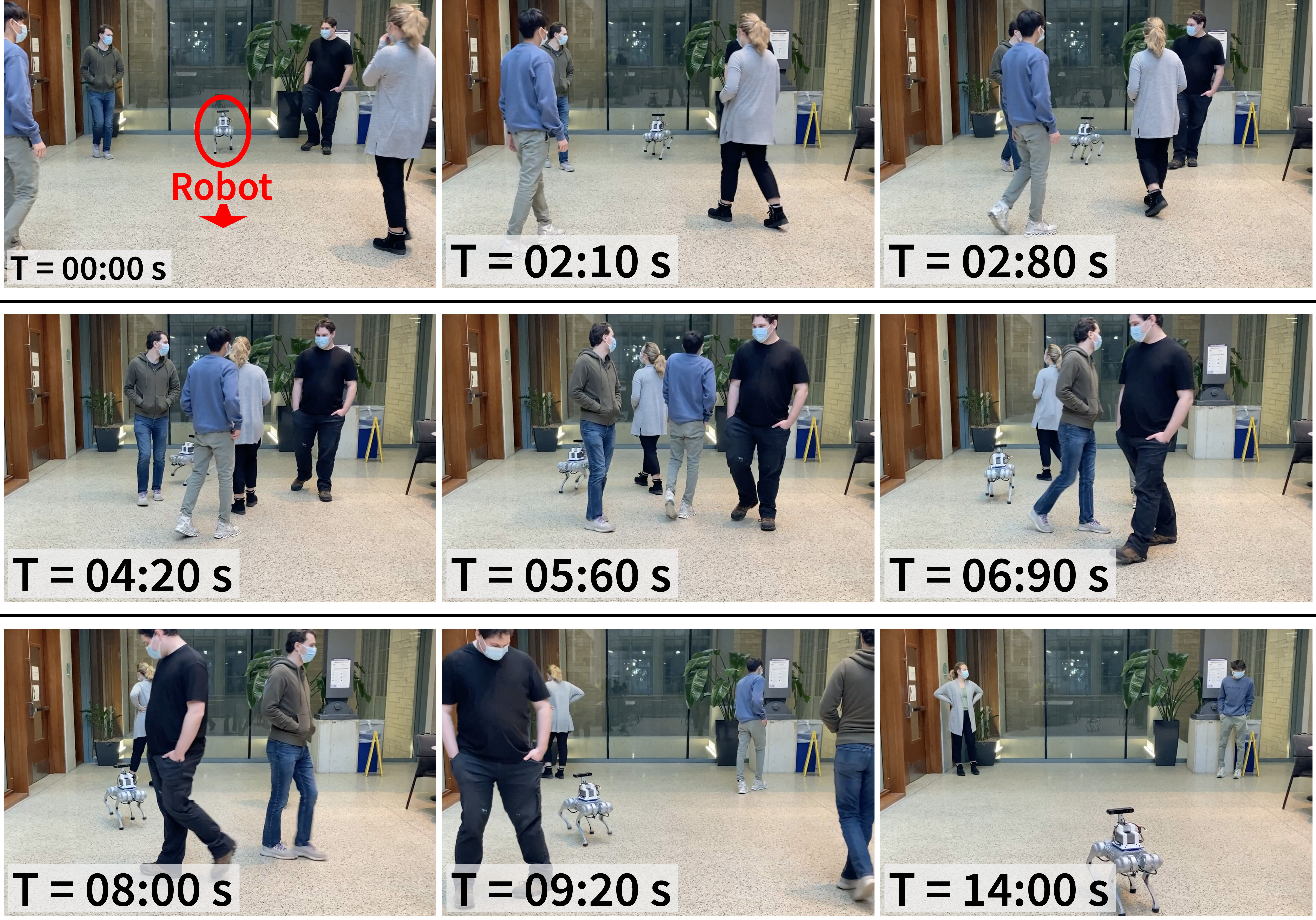}
    \captionsetup{justification=justified}
    \caption{Snapshots from the real-world demonstration with 4 pedestrians. The specification of this demonstration mimics the simulation tests shown in Figure~\ref{fig:simulated_frame_plots}. Videos of the demonstrations are included in the project website: \href{https://sites.google.com/view/brne-crowdnav}{https://sites.google.com/view/brne-crowdnav}.}
    \label{fig:real_world_demonstration_1}
    \vspace{-1em}
\end{figure*}

\begin{figure*}[htbp]
    \centering
    \includegraphics[width=\textwidth]{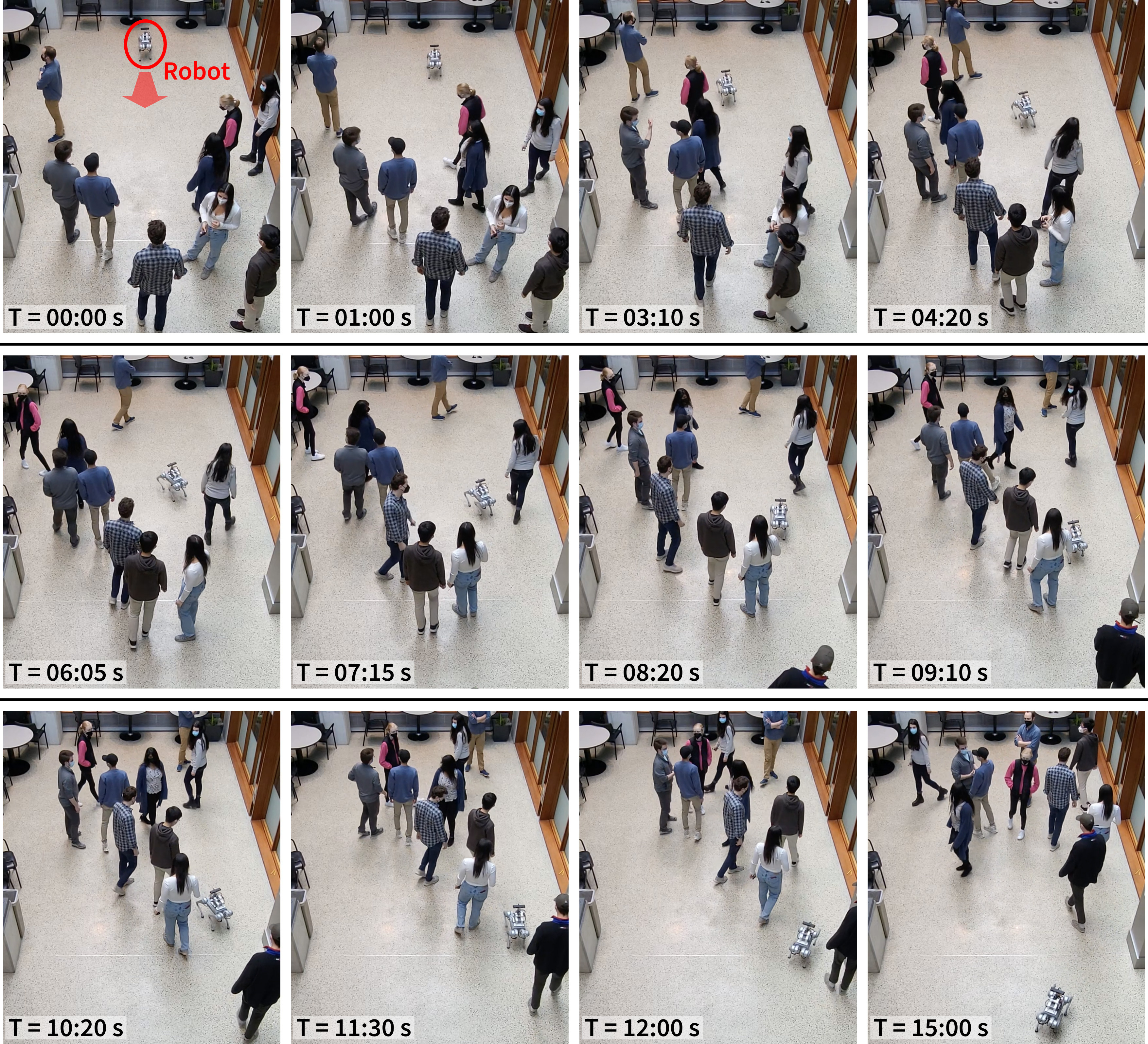}
    \captionsetup{justification=justified}
    \caption{Snapshots from the real-world demonstration with 10 pedestrians. The specification of this demonstration mimics the benchmark tests shown in Figure~\ref{fig:benchmark_frames}. Videos of the demonstrations are included in the project website: \href{https://sites.google.com/view/brne-crowdnav}{https://sites.google.com/view/brne-crowdnav}.}
    \label{fig:real_world_demonstration_2}
    \vspace{-1em}
\end{figure*}

\subsubsection*{Rationale of metrics} We use the four main metrics: (1) The total number of pedestrian collisions; (2) The total number of freezing behaviors, where the robot fails to reach the goal within 60 seconds or runs into environmental obstacles; (3) Path length; (4) Time to goal. Full results with complete benchmark metrics are provided in the appendix. 

\subsubsection*{Results} Table~\ref{table:benchmark_crowd_navigation} shows results of the human dataset crowd navigation experiments. Figure~\ref{fig:benchmark_frames} shows representative snapshots of the benchmark tests. Both our method (\textsf{BRNE}) and the social force model (\textsf{SF}) significantly outperform the rest of the methods on navigation safety. Our method shares the minimal number of collisions with the social force model while being the only method with zero freezing behavior and exhibiting better path efficiency compared with the social force model. More importantly, compared to our meta-planner baseline \textsf{Meta-planner}(Ours), whose path is used by \textsf{BRNE} to generate the nominal mixed strategy, \textsf{BRNE} significantly improves navigation safety, reducing the number of pedestrian collisions from 37 to 1, while sharing nearly identical navigation efficiency as the baseline. As discussed in the next subsection, this one collision incident of our method is due to the pre-processing of the dataset instead of our method's performance. These experimental results show that, when interacting with real human crowds in unstructured environments, our method can improve an existing navigation method's safety to be near-perfect with minimal impact on navigation efficiency. This indicates that our method reaches human-level crowd navigation performance with a meta-planner on safety and navigation efficiency. The results also show great potential for deploying our method on other navigation pipelines. Videos of the human dataset experiments are included in our project website: \href{https://sites.google.com/view/brne-crowdnav}{https://sites.google.com/view/brne-crowdnav}. 

\subsubsection*{Fail case analysis} There is one trial where the robot collides with a pedestrian, here we provide an analysis of the cause, with the snapshots of the collision shown in Figure~\ref{fig:socnavbench_collision_trial}. As we can see, the sudden appearance of a pedestrian right in front of the robot, which is an artifact of the pre-processing of the dataset, leaves the robot with no space and no time to react. Therefore, this collision incident does not reflect our method's navigation safety performance and can be avoided in practice. The video of the failure trial is included in our project website: \href{https://sites.google.com/view/brne-crowdnav}{https://sites.google.com/view/brne-crowdnav}. 

\subsubsection*{Discussion on sim-to-real transfer} While we will demonstrate our BRNE algorithm on robot hardware with real-world humans in the next section, we leave benchmarking our method in real-world crowds for future work, due to the time and financial cost of conducting such benchmarks. Here, we want to discuss the potential gap of sim-to-real transfer based on the human dataset benchmark results. First, since the robot is observing real-world human behavior in the benchmark, the benchmark results can at least verify \emph{which method will fail in the real world}---a method that is dangerous or inefficient in the benchmark will likely remain dangerous or inefficient in the real world. Second, one main challenge of sim-to-real transfer in crowd navigation is the misalignment of human behavior assumptions. In our case, by inferring mixed strategy Nash equilibrium for navigation, the robot assumes the humans are cooperative, which is not necessarily true. However, the human dataset benchmark can serve as a preliminary verification of the influence of the misaligned human behavior assumption. Recall that, even though human agents are not reactive in the benchmark, the robot still believes it is interacting with real humans and observing real human behavior. The numerical efficiency of our method enables fast replanning during navigation, allowing the robot to constantly adjust its plan based on the latest observation, including observations that humans do not follow the assumed cooperative behavior. Furthermore, the robot makes navigation decisions based on mixed strategies---it does not anticipate exact human behavior but instead a distribution of likely human behaviors, which already considers unexpected human reactions. Thus, the fast replanning capability and mixed strategy-based modeling make the performance of our method in the human dataset benchmark more transferable in the real world. Lastly, our method is essentially built on top of an existing meta-planner provided by the benchmark. The modular design of our algorithm enhances the deployment flexibility, which lowers the gap of sim-to-real transfer. For example, our method can be integrated into existing robot navigation frameworks such as the navigation stack from the Robot Operating System~\citep{macenski_marathon_2020,macenski_robot_2022}. This is particularly important for crowd navigation in unstructured real-world environments, and is not addressed by most existing crowd navigation and game-theoretic planning methods. Lastly, even though out of the scope of this paper, we want to point out one important aspect of the sim-to-transfer that is not addressed in the benchmark: the perception aspect of crowd navigation. The benchmark assumes near-perfect robot perception of pedestrian states, but robust perception for both robot localization and pedestrian tracking remains an open challenge and could influence the real-world deployment of our method. 

\subsection{Real-World Hardware Demonstration}
\label{sec:hardware_demonstration}

We demonstrate the practicality of our algorithm in the real world on an untethered robot quadruped with fully on-board perception and computation. The robot successfully completed 10 real-time crowd navigation tasks without collision or freezing behavior, with the number of real human pedestrians varying from 4 to 10. Figure~\ref{fig:real_world_demonstration_1} and Figure~\ref{fig:real_world_demonstration_2} show snapshots from two representative demonstrations. We include videos of all 10 demonstrations on our project website: \href{https://sites.google.com/view/brne-crowdnav}{https://sites.google.com/view/brne-crowdnav}. 

Note that the demonstrations are only for verifying the algorithm's practicality under the conditions and limitations of the real-world environment and robot hardware. They are not intended to be studies for benchmarking the algorithm's performance in real-world human crowds, as such studies are out of the scope of this paper and would require implementing other algorithms in hardware for comparison.

\subsubsection*{System Specification}

We use a Unitree Go1 EDU quadruped robot as the demonstration platform. Even though the robot has holonomic dynamics, we constrain its dynamics to be a differential-drive vehicle to demonstrate the compatibility of our algorithm with a broader range of mobile robots. We use a ZED 2i camera mounted onboard as the perception module. The perception module localizes the robot using visual inertia odometry at 50 Hz and tracks pedestrian position and velocity at 15 Hz. All the computation, including perception and planning with BRNE, is processed entirely by an Nvidia Jetson AGX Orin embedded computer mounted on the robot.

\subsubsection*{Software Implementation}

We implement our BRNE algorithm within the Isaac Robot Operation System (ROS) framework to communicate with the perception module and the robot's low-level controller. We implement the algorithm in Python and used the PyTorch library to accelerate the weight updating step of the algorithm using the low-power GPU of the onboard computer. To maintain the real-time computation speed, we limited the maximum number of agents processed by the algorithm to 5, which allows the algorithm to control the robot at 10 Hz and replan at 5 Hz, with a planning horizon of 2 seconds and 200 trajectory samples per agent. We will release the hardware design and software implementation alongside the algorithm parameters. The algorithm implementation for the hardware demonstration (including parameters) will be released on \href{https://sites.google.com/view/brne-crowdnav}{https://sites.google.com/view/brne-crowdnav}.

\subsubsection*{Demonstration Design} 

The demonstrations are conducted in a $3m$ wide and $9m$ long space inside an indoor atrium. The demonstrations are designed to mimic the specifications of the crowd navigation tests with ORCA agents (such as the ones shown in Figure~\ref{fig:simulated_frame_plots}) and with pedestrian datasets (such as the ones shown in Figure~\ref{fig:benchmark_frames}). The number of pedestrians varies from $4$ to $10$. Our demonstrations follow common design principles in other crowd navigation works, such as \citet{mavrogiannis_social_2018} and \citet{mavrogiannis_winding_2023}. To mimic the test with ORCA agents, pedestrians were instructed to go to the designated goals in a circle. To mimic the pedestrian dataset benchmark, pedestrians were instructed to move freely in the space. Across all demonstrations, the pedestrians are instructed to ``move with normal walking speed and treat the robot as a walking person or dog''. The navigation task for the robot is to move safely and efficiently from one side of the room to the other, with a traveling distance varying between $6m$ to $8m$. The robot moves at a nominal speed of $0.5 m/s$.

\subsubsection*{Discussion on Hardware Demonstration}
The first challenge encountered during the hardware demonstration is specifying the risk function (Definition~\ref{def:risk_function}). We model the risk function as a logistic function based on the distance between two agents. The parameters of this function, which dictate how an agent's willingness to be near others decreases with distance, are set manually through trial and error. Future work could involve learning the risk function from pedestrian datasets and dynamically adjusting its parameters based on real-time observations. A second challenge arises from inconsistent pedestrian observations due to occlusion. These inconsistencies cause agents to appear and disappear unexpectedly, leading to inaccuracies in pedestrian velocity estimation. To address this, we apply an iterative-closest-point method to associate pedestrians across frames, which reduces this issue but may still result in oscillations in velocity estimates, affecting the fidelity of nominal pedestrian strategies. Future work could explore data-driven filters to further stabilize pedestrian tracking.

\section{Conclusion} \label{sec:conclusion_discussion}

In this work, we propose a computation-efficient mixed strategy Nash equilibrium model for crowd navigation. Mixed strategy Nash equilibrium provides a high-level cooperation model for the robot to plan actions that leverage human cooperation for collision avoidance while maintaining the uncertainty in human behavior. Despite the general hardness of computing mixed strategy Nash equilibrium, we achieve real-time inference speed on a laptop CPU and a low-power embedded computer by establishing the formal connection between mixed strategy Nash equilibrium and a simple iterative Bayesian update scheme. We name the proposed model Bayesian Recursive Nash Equilibrium (BRNE). We further develop a model predictive crowd navigation framework using Gaussian processes to bootstrap agents' nominal mixed strategies. The proposed model can be incorporated into existing navigation frameworks to navigate alongside humans in unstructured environments with static obstacles. Our experiment results show that our BRNE model significantly improves the safety of a human-unaware planner without compromising navigation efficiency in the human dataset benchmark, reaching human-level performance. Compared to other crowd navigation algorithms, our model consistently outperforms them in both safety and navigation efficiency. Lastly, we demonstrate the practicality of our model on an untethered robot quadruped for real-time crowd navigation with fully onboard perception and computation.

Beyond the technical contributions, our work also provides valuable insights into how game theory models can be applied to robotics applications. Most, if not all, existing game-theoretic planning algorithms are designed with a top-down approach: the planner aims to solve a generalized game, with the player objectives being specified later for different applications. While this approach can be applied to arbitrary games, the limitation of the top-down approach is the hardness of solving a generalized game, which is often intractable or too expensive to compute. In this work, we take a bottom-up approach to apply game theory models. We use mixed strategy Nash equilibrium as a high-level principle to design a specific behavioral model for real-time crowd navigation with limited computation resources. This bottom-up approach allows us to utilize the analytical power of mixed strategy Nash equilibrium while maintaining a sufficient computation load for real-time robot navigation. We consider our bottom-up approach to make an orthogonal contribution, with respect to existing top-down approaches, to the principles of designing game-theoretic planners. 

There are several limitations of the proposed crowd navigation framework. First, our framework does not support extra constraints on decision-making, such as guarantees on static obstacle avoidance. Even though we have zero static obstacle collision in the human dataset benchmark, this is achieved using a collision-free meta-planner as the nominal strategy. Our method itself does not guarantee the robot from a collision with environmental obstacles. Second, the sampling scheme limits the practical performance of our crowd navigation framework. We proposed two sampling strategies for approximating the mixed strategy Nash equilibrium. The rejection sampling-based strategy generates a more accurate posterior but is too slow for real-time decision-making. The importance-sampling-based strategy supports real-time posterior estimation, but its performance is limited by the support of initial samples and suffers from common issues faced by particle filters, such as sample degeneracy. Third, throughout the experiments, we assume the localization for the robot and perception of pedestrians are given, while perception in real human crowds is still an open challenge. Our ongoing work involves benchmarking the algorithm on the hardware in the real world, and investigating how sensor noise would affect the algorithm's performance. 

Mixed strategy Nash equilibrium paves the way for future works to achieve truly adaptive crowd navigation in varying environments, as it provides extra information enhancing the robot’s adaptivity during the run-time. We want to point out an exciting future direction stemming from our model, which is adaptive or contingent control synthesis based on mixed strategy Nash equilibrium. In this work, we simply synthesize robot control signals to follow the mean of the converged mixed strategy of the robot. This approach does not fully utilize the information encoded in mixed strategy Nash equilibrium. For example, mixed strategy predictions can be a prior distribution when existing information is insufficient for precise behavior prediction. Based on this prior, the robot could quickly refine its prediction or perform contingency planning with new online measurements.

\section*{\normalsize Acknowledgements}
The authors would like to thank Matthew Elwin and Maia Traub for their assistance with the hardware demonstration.

\section*{\normalsize Funding} 

This material is supported by the Honda Research Institute Grant HRI-001479. Any opinions, findings, conclusions, or recommendations expressed in this material are those of the authors and do not necessarily reflect the views of the aforementioned institutions.

\section*{\normalsize Declaration of conflicting interests}

The author(s) declared no potential conflicts of interest with respect to the research, authorship, and/or publication of this article.

\bibliographystyle{SageH}
\bibliography{references}


\section*{Appendix A: Proofs}

Theorem~\ref{theorem:twoagent_posterior_optimality} and Theorem~\ref{theorem:multiagent_posterior_optimality} are to prove the convergence of Algorithm~\ref{algo:two_agent_update} and Algorithm~\ref{algo:multi_agent_update}. Again, we start with the two-agent scenario and extend the result to the multi-agent scenario.

\begin{theorem} \label{theorem:two_agent_convergence}
    The sequence of $\{(p_{i}^{[k]}, p_{j}^{[k]})\}_k$ from the iterations of Algorithm \ref{algo:two_agent_update} is a convergent sequence.
\end{theorem}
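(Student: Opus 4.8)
The plan is to exhibit an exact potential function for the two-agent game and to recognize Algorithm~\ref{algo:two_agent_update} as block-coordinate (best-response) descent on it. Concretely, I would set
\[
F(p_i, p_j) = \mathbb{E}_{p_i, p_j}[r] + D(p_i \Vert p_i') + D(p_j \Vert p_j').
\]
When $p_j$ is held fixed, the terms of $F$ depending on $p_i$ are exactly $\mathbb{E}_{p_i, p_j}[r] + D(p_i\Vert p_i')$, which is precisely the objective minimized in Theorem~\ref{theorem:twoagent_posterior_optimality}; the remaining term $D(p_j\Vert p_j')$ is constant in $p_i$. Hence the update $p_i^{[k+1]} = \argmin_p \mathbb{E}_{p, p_j^{[k]}}[r] + D(p\Vert p_i')$ is the \emph{global} minimizer of $F(\cdot, p_j^{[k]})$, and symmetrically the $p_j$ update is the global minimizer of $F(p_i^{[k+1]}, \cdot)$. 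Consequently $F$ is non-increasing along the iterates, $F(p_i^{[k+1]}, p_j^{[k+1]}) \le F(p_i^{[k+1]}, p_j^{[k]}) \le F(p_i^{[k]}, p_j^{[k]})$. Moreover $F$ is bounded below by zero, since $\mathbb{E}_{p_i,p_j}[r]\ge 0$ because $r$ takes values in $\mathbb{R}_0^{+}$ (Definition~\ref{def:risk_function}) and each KL-divergence is non-negative. A monotone non-increasing sequence that is bounded below converges, so $\{F(p_i^{[k]}, p_j^{[k]})\}_k$ converges to some $F^* \ge 0$.

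To promote convergence of the potential to convergence of the mixed strategies, I would invoke the generalized Pythagorean (three-point) identity for the KL-regularized minimization underlying Theorem~\ref{theorem:twoagent_posterior_optimality}. Since each update has the closed form $p^{\star}\propto p'\exp\big({-}\mathbb{E}_{q}[r]\big)$ for the appropriate fixed $q$, a short computation substituting this form shows that for the global minimizer the decrease in the objective equals the KL-divergence from the previous iterate to the new one, i.e.\ $F(p_i^{[k]}, p_j^{[k]}) - F(p_i^{[k+1]}, p_j^{[k]}) = D(p_i^{[k]}\Vert p_i^{[k+1]})$, and likewise for the $p_j$ half-step. Telescoping over $k$ yields
\[
\sum_{k=0}^{\infty}\Big[ D(p_i^{[k]}\Vert p_i^{[k+1]}) + D(p_j^{[k]}\Vert p_j^{[k+1]}) \Big] = F(p_i^{[0]}, p_j^{[0]}) - F^* < \infty,
\]
so every summand vanishes, and by Pinsker's inequality the consecutive total-variation distances $\Vert p_i^{[k]} - p_i^{[k+1]}\Vert_1$ and $\Vert p_j^{[k]} - p_j^{[k+1]}\Vert_1$ tend to zero.

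The main obstacle is that vanishing consecutive displacements do not by themselves imply a limit, because $F$ is only \emph{biconvex}: the coupling term $\mathbb{E}_{p_i,p_j}[r]$ is bilinear rather than jointly convex, so monotone decrease of $F$ need not single out a unique limit point. I would close this gap with a compactness-and-fixed-point argument. Working on the (weakly) compact sublevel set of mixed strategies with $F$-value at most $F(p_i^{[0]}, p_j^{[0]})$, I would extract a convergent subsequence, use continuity of the update map to show its limit is a fixed point of the best-response dynamics---equivalently, a pair satisfying the Nash condition~(\ref{eq:mixed_ne_def})---and then use the strict convexity of the KL terms, which makes each best response unique, together with the fact that all subsequential limits share the common potential value $F^*$, to conclude that the limit is unique and the whole sequence converges. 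Making the compactness and continuity rigorous for the chosen topology on $\mathcal{P}$ is the delicate step, and is where the bulk of the careful analysis should be concentrated.
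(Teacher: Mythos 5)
Your first paragraph is exactly the paper's proof: the paper also defines the potential $F(p_i,p_j) = \mathbb{E}_{p_i,p_j}[r] + D(p_i\Vert p_i') + D(p_j\Vert p_j')$, observes via Theorem~\ref{theorem:twoagent_posterior_optimality} that each half-step is the global minimizer of $F$ in one block with the other held fixed, chains the two inequalities $F(p_i^{[k+1]},p_j^{[k+1]}) \le F(p_i^{[k+1]},p_j^{[k]}) \le F(p_i^{[k]},p_j^{[k]})$, and invokes the monotone convergence theorem using non-negativity of the KL terms. One point worth noting: the paper only claims convergence ``under (\ref{eq:convergence_measure})'', i.e.\ convergence of the scalar sequence $F(p_i^{[k]},p_j^{[k]})$, not convergence of the distributions themselves in any topology on $\mathcal{P}$. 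So the paper's proof stops where your first paragraph stops, and under that reading your argument is complete and identical in approach.

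Everything after that is a genuine strengthening that the paper does not attempt. Your Pythagorean identity is correct: since $p_i^{[k+1]} \propto p_i'\exp(-\mathbb{E}_{p_j^{[k]}}[r])$, a direct computation gives $\mathbb{E}_{p,p_j^{[k]}}[r] + D(p\Vert p_i') = D(p\Vert p_i^{[k+1]}) + \log\eta^{-1}$ for every admissible $p$, so the per-half-step decrease of $F$ equals $D(p_i^{[k]}\Vert p_i^{[k+1]})$, the telescoping sum is finite, and Pinsker gives vanishing consecutive total-variation increments. That buys quantitative information the paper's argument does not provide. You are also right that this still falls short of convergence of the iterates: $F$ is only biconvex, so one must pass through subsequential limits on a compact set, continuity of the best-response map, and uniqueness of the limit --- and you leave that step as a sketch. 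If the theorem is read literally as convergence of $\{(p_i^{[k]},p_j^{[k]})\}_k$ in $\mathcal{P}\times\mathcal{P}$, neither your proposal nor the paper closes that gap; the paper avoids it by defining convergence through (\ref{eq:convergence_measure}). Your proposal is therefore correct for the statement as the paper actually proves it, and honestly flags the remaining work for the stronger statement.
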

\begin{proof}
    We prove the sequence's convergence through the monotone convergence theorem, by proving that the sequence monotonically decreases on a function with a finite lower bound. We start with the monotone decrease result. 

    Based on the global optimality result in Theorem~\ref{theorem:twoagent_posterior_optimality}, for each iteration in Algorithm~\ref{algo:two_agent_update}, after computing agent $i$'s posterior belief $p_i^{[k+1]}$, we have the following inequality:
    \begin{align}
        & \mathbb{E}_{p_{i}^{[k+1]}, p_{j}^{[k]}}[r] {+} D(p_{i}^{[k+1]}\Vert p^\prime_{i}) 
    \leq \mathbb{E}_{p_{i}^{[k]}, p_{j}^{[k]}}[r] {+} D(p_{i}^{[k]}\Vert p^\prime_{i}).
    \end{align} Adding the term $D(p_{j}^{[k]}\Vert p^\prime_{j})$ to both sides gives us:
    \begin{align}
        & \mathbb{E}_{p_{i}^{[k+1]}, p_{j}^{[k]}}[r] + D(p_{i}^{[k+1]}\Vert p^\prime_{i}) + D(p_{j}^{[k]}\Vert p^\prime_{j}) \nonumber \\
        \leq \quad & \mathbb{E}_{p_{i}^{[k]}, p_{j}^{[k]}}[r] + D(p_{i}^{[k]}\Vert p^\prime_{i}) + D(p_{j}^{[k]}\Vert p^\prime_{j}). \label{eq:temp_eq1}
    \end{align} Then, given $p_i^{[k+1]}$, updating $p_j^{[k]}$ gives us the following inequality:
    \begin{align}
        & \mathbb{E}_{p_{i}^{[k+1]}, p_{j}^{[k+1]}}[r] {+} D(p_{j}^{[k+1]}\Vert p^\prime_{j}) \leq \mathbb{E}_{p_{i}^{[k+1]}, p_{j}^{[k]}}[r] {+} D(p_{j}^{[k]}\Vert p^\prime_{j}).
    \end{align} Adding $D(p_{i}^{[k+1]}\Vert p^\prime_{i})$ to both sides gives us:
    \begin{align}
        & \mathbb{E}_{p_{i}^{[k+1]}, p_{j}^{[k+1]}}[r] + D(p_{i}^{[k+1]}\Vert p^\prime_{i}) + D(p_{j}^{[k+1]}\Vert p^\prime_{j}) \nonumber \\
        \leq \quad & \mathbb{E}_{p_{i}^{[k+1]}, p_{j}^{[k]}}[r] + D(p_{i}^{[k+1]}\Vert p^\prime_{i}) + D(p_{j}^{[k]}\Vert p^\prime_{j}). \label{eq:temp_eq2}
    \end{align} By applying the chain rule to inequality (\ref{eq:temp_eq1}) and (\ref{eq:temp_eq2}), we have:
    \begin{align}
        & \mathbb{E}_{p_{i}^{[k+1]}, p_{j}^{[k+1]}}[r] + D(p_{i}^{[k+1]}\Vert p^\prime_{i}) + D(p_{j}^{[k+1]}\Vert p^\prime_{j}) \nonumber \\
        \leq \quad & \mathbb{E}_{p_{i}^{[k]}, p_{j}^{[k]}}[r] + D(p_{i}^{[k]}\Vert p^\prime_{i}) + D(p_{j}^{[k]}\Vert p^\prime_{j}). 
    \end{align} The above inequality means the sequence $\{(p_{i}^{[k]}, p_{j}^{[k]})\}_k$ from Algorithm \ref{algo:two_agent_update} monotonically decreases the function:
    \begin{align}
        F(p_i,p_j) = \mathbb{E}_{p_{i}, p_{j}}[r] + D(p_{i}\Vert p^\prime_i) + D(p_{j}\Vert p^\prime_j). \label{eq:temp_obj_func}
    \end{align} Based on the non-negativity of KL-divergence, this function (\ref{eq:temp_obj_func}) has a finite lower-bound. Thus, based on the monotone convergence theorem, the sequence $\{(p_{i}^{[k]}, p_{j}^{[k]})\}_k$ is convergent under (\ref{eq:convergence_measure}). \qed
\end{proof}

Theorem~\ref{theorem:two_agent_convergence} can be extended to the multi-agent scenario in Algorithm~\ref{algo:multi_agent_update}. 
\begin{theorem} \label{theorem:multi_agent_convergence}
    The sequence of $\{(p_{1}^{[k]}, \dots, p_{N}^{[k]})\}_k$ from the iterations of Algorithm \ref{algo:multi_agent_update} is a convergent sequence.
\end{theorem}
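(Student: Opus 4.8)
The plan is to mirror the two-agent argument of Theorem~\ref{theorem:two_agent_convergence}, replacing the pairwise potential with the multi-agent potential $F$ from Theorem~\ref{theorem:nash_equilibrium} and invoking the monotone convergence theorem. Define
\begin{align}
F(p_1,\dots,p_N) = \sum_{i=1}^{N}\sum_{j=i+1}^{N} \mathbb{E}_{p_i,p_j}[r] + \sum_{i=1}^{N} D(p_i\Vert p_i^\prime).
\end{align}
The goal is to show that $\{F(p_1^{[k]},\dots,p_N^{[k]})\}_k$ is monotonically non-increasing and bounded below, which yields convergence in the sense of (\ref{eq:convergence_measure}).

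First I would establish that each \emph{single-agent} update inside the inner loop of Algorithm~\ref{algo:multi_agent_update} does not increase $F$. The key observation is that when agent $i$ is updated, every other agent's mixed strategy is held fixed (at $p_j^{[k+1]}$ for $j<i$ and $p_j^{[k]}$ for $j>i$), so the only terms of $F$ that change are those involving $p_i$: the pairwise risks $\sum_{j\neq i}\mathbb{E}_{p_i,p_j}[r] = \mathbb{E}_{p_i,p_{/i}}[r]$ and the single divergence $D(p_i\Vert p_i^\prime)$. By Theorem~\ref{theorem:multiagent_posterior_optimality}, the Bayesian posterior $p_i^{[k+1]}$ is precisely the global minimizer of $\mathbb{E}_{p,p_{/i}}[r] + D(p\Vert p_i^\prime)$ over all valid distributions $p$, with $p_{/i}$ being exactly the aggregate of the currently fixed strategies. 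Hence this sub-step can only decrease, or leave unchanged, the $p_i$-dependent part of $F$, and therefore $F$ itself.

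Next I would chain these inequalities. Within one outer iteration $k\to k+1$ the agents are updated sequentially for $i=1,\dots,N$; composing the $N$ coordinate-wise descent inequalities gives $F(p_1^{[k+1]},\dots,p_N^{[k+1]}) \leq F(p_1^{[k]},\dots,p_N^{[k]})$. This is the monotone-decrease step, analogous to the chain argument across (\ref{eq:temp_eq1}) and (\ref{eq:temp_eq2}). For the lower bound, I would note that $\mathbb{E}_{p_i,p_j}[r]\geq 0$ since the collision risk $r$ is non-negative by Definition~\ref{def:risk_function}, and $D(p_i\Vert p_i^\prime)\geq 0$ by the non-negativity of KL-divergence, so $F\geq 0$. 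A sequence that is monotonically non-increasing and bounded below converges by the monotone convergence theorem, giving (\ref{eq:convergence_measure}).

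The main obstacle is the \emph{Gauss--Seidel} (sequential) structure of the inner loop: unlike a simultaneous update, the aggregate $p_{/i}$ seen by agent $i$ mixes already-updated ($j<i$) and not-yet-updated ($j>i$) strategies. The care required is to verify that at the instant of each sub-step all other strategies are genuinely fixed, so that the update is an exact coordinate-wise minimization of $F$ and Theorem~\ref{theorem:multiagent_posterior_optimality} applies verbatim. Once this bookkeeping is in place, the descent property holds for each of the $N$ sub-steps regardless of ordering, and the remainder of the argument is routine.
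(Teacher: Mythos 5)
Your proposal is correct and follows essentially the same route as the paper's proof: both use Theorem~\ref{theorem:multiagent_posterior_optimality} to show each sequential single-agent update is an exact coordinate-wise minimization of the $p_i$-dependent part of the potential $F$, chain the resulting descent inequalities over $i=1,\dots,N$, and conclude via the monotone convergence theorem using non-negativity of $r$ and of the KL-divergence. The paper carries out the same argument with explicit index bookkeeping of the nested summations, whereas you abstract it as Gauss--Seidel coordinate descent, but the substance is identical.
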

\begin{proof}
    At iteration $k$, for an arbitrary agent with index $a\in\mathcal{I}$, based on the global optimality result in Theorem~\ref{theorem:multiagent_posterior_optimality}, we have the following inequality:
    \begin{align}
        & {\sum_{i=1}^{a-1}} \mathbb{E}_{p_{a}^{[k+1]}, p_{i}^{[k+1]}}[r] {+} {\sum_{i=a+1}^{N}} \mathbb{E}_{p_{a}^{[k+1]}, p_{i}^{[k]}}[r] {+} D(p_{a}^{[k+1]}\Vert p^\prime_{a}) \nonumber \\
        \leq \quad & {\sum_{i=1}^{a-1}} \mathbb{E}_{p_{a}^{[k]}, p_{i}^{[k+1]}}[r] {+} {\sum_{i=a+1}^{N}} \mathbb{E}_{p_{a}^{[k]}, p_{i}^{[k]}}[r] {+} D(p_{a}^{[k]}\Vert p^\prime_{a}).
    \end{align} By adding the following nested summation to both sides of the inequality:
    \begin{align}
        & \sum_{i=1}^{a-1}\sum_{j=i+1}^{a-1} \mathbb{E}_{p_i^{[k+1]},p_j^{[k+1]}}[r] + \sum_{i=a+1}^{N}\sum_{j=a+1}^{N} \mathbb{E}_{p_i^{[k]},p_j^{[k]}}[r] \nonumber \\
        & + \sum_{i=1}^{a-1} D(p_{i}^{[k+1]}\Vert p^\prime_{i}) + \sum_{i=a+1}^{N} D(p_{i}^{[k]}\Vert p^\prime_{i}) ,
    \end{align} we have the left-hand side as:
    \begin{align}
        & {\sum_{i=1}^{a-1}} \mathbb{E}_{p_{a}^{[k+1]}, p_{i}^{[k+1]}}[r] {+} {\sum_{i=a+1}^{N}} \mathbb{E}_{p_{a}^{[k+1]}, p_{i}^{[k]}}[r] {+} D(p_{a}^{[k+1]}\Vert p^\prime_{a}) \nonumber \\
        & + {\sum_{i=1}^{a-1}\sum_{j=i+1}^{a-1}} \mathbb{E}_{p_i^{[k+1]},p_j^{[k+1]}}[r] {+} {\sum_{i=a+1}^{N}\sum_{j=a+1}^{N}} \mathbb{E}_{p_i^{[k]},p_j^{[k]}}[r] \nonumber \\
        & + \sum_{i=1}^{a-1} D(p_{i}^{[k+1]}\Vert p^\prime_{i}) + \sum_{i=a+1}^{N} D(p_{i}^{[k]}\Vert p^\prime_{i})  \\
        = \quad & \sum_{i=1}^{a-1}\sum_{j=i+1}^{a} \mathbb{E}_{p_i^{[k+1]},p_j^{[k+1]}}[r] + {\sum_{i=a+1}^{N}} \mathbb{E}_{p_{a}^{[k+1]}, p_{i}^{[k]}}[r] \nonumber \\
        & + \sum_{i=a+1}^{N}\sum_{j=a+1}^{N} \mathbb{E}_{p_i^{[k]},p_j^{[k]}}[r] \nonumber \\
        & + \sum_{i=1}^{a} D(p_{i}^{[k+1]}\Vert p^\prime_{i}) + \sum_{i=a+1}^{N} D(p_{i}^{[k]}\Vert p^\prime_{i}), 
    \end{align} and the right hand side as:
    \begin{align}
        & {\sum_{i=1}^{a-1}} \mathbb{E}_{p_a^{[k]},p_i^{[k+1]}}[r] {+} {\sum_{i=a+1}^{N}} \mathbb{E}_{p_a^{[k]},p_i^{[k]}}[r] {+} D(p_{a}^{[k]}\Vert p^\prime_{a}) \nonumber \\
        & + {\sum_{i=1}^{a-1}\sum_{j=i+1}^{a-1}} \mathbb{E}_{p_i^{[k+1]},p_j^{[k+1]}}[r] {+} {\sum_{i=a+1}^{N}\sum_{j=a+1}^{N}} \mathbb{E}_{p_i^{[k]},p_j^{[k]}}[r] \nonumber \\
        & + \sum_{i=1}^{a-1} D(p_{i}^{[k+1]}\Vert p^\prime_{i}) + \sum_{i=a+1}^{N} D(p_{i}^{[k]}\Vert p^\prime_{i}) \\
        = \quad & {\sum_{i=1}^{a-1}\sum_{j=i+1}^{a-1}} \mathbb{E}_{p_i^{[k+1]},p_j^{[k+1]}}[r] {+} {\sum_{i=1}^{a-1}} \mathbb{E}_{p_a^{[k]},p_i^{[k+1]}}[r] \nonumber \\
        & + {\sum_{i=a+1}^{N}\sum_{j=a}^{N-1}} \mathbb{E}_{p_i^{[k]},p_j^{[k]}}[r] \nonumber \\
        & + \sum_{i=1}^{a-1} D(p_{i}^{[k+1]}\Vert p^\prime_{i}) + \sum_{j=i+1}^{N} D(p_{i}^{[k]}\Vert p^\prime_{i}). 
    \end{align} Combining both sides gives us the following inequality:
    \begin{align}
        & \sum_{i=1}^{a-1}\sum_{j=i+1}^{a} \mathbb{E}_{p_i^{[k+1]},p_j^{[k+1]}}[r] + {\sum_{i=a+1}^{N}} \mathbb{E}_{p_{a}^{[k+1]}, p_{i}^{[k]}}[r] \nonumber \\
        & + \sum_{i=a+1}^{N}\sum_{j=a+1}^{N} \mathbb{E}_{p_i^{[k]},p_j^{[k]}}[r] \nonumber \\
        & + \sum_{i=1}^{a} D(p_{i}^{[k+1]}\Vert p^\prime_{i}) + \sum_{i=a+1}^{N} D(p_{i}^{[k]}\Vert p^\prime_{i}) \nonumber \\
        \leq \quad & {\sum_{i=1}^{a-1}\sum_{j=i+1}^{a-1}} \mathbb{E}_{p_i^{[k+1]},p_j^{[k+1]}}[r] {+} {\sum_{i=1}^{a-1}} \mathbb{E}_{p_a^{[k]},p_i^{[k+1]}}[r] \nonumber \\
        & + {\sum_{i=a+1}^{N}\sum_{j=a}^{N-1}} \mathbb{E}_{p_i^{[k]},p_j^{[k]}}[r] \nonumber \\
        & + \sum_{i=1}^{a-1} D(p_{i}^{[k+1]}\Vert p^\prime_{i}) + \sum_{j=i+1}^{N} D(p_{i}^{[k]}\Vert p^\prime_{i}).
    \end{align} By iterating the agent index $a$ from $1$ to $N$, based on the chain rule of inequality, we have the following chain of inequalities:
    \begin{align}
        & \sum_{i=1}^{N}\sum_{j=i+1}^{N} \mathbb{E}_{p_i^{[k+1]},p_j^{[k+1]}}[r] + \sum_{i=1}^{N} D(p_{i}^{[k+1]}\Vert p^\prime_{i}) \nonumber \\
        \leq \quad & \cdots \nonumber \\
        \leq \quad & {\sum_{i=1}^{a-1}\sum_{j=i+1}^{a-1}} \mathbb{E}_{p_i^{[k+1]},p_j^{[k+1]}}[r] {+} {\sum_{i=1}^{a-1}} \mathbb{E}_{p_a^{[k]},p_i^{[k+1]}}[r] \nonumber \\
        & + {\sum_{i=a+1}^{N}\sum_{j=a}^{N-1}} \mathbb{E}_{p_i^{[k]},p_j^{[k]}}[r] + \sum_{i=1}^{N} \nonumber \\
        & + \sum_{i=1}^{a-1} D(p_{i}^{[k+1]}\Vert p^\prime_{i}) + \sum_{j=i+1}^{N} D(p_{i}^{[k]}\Vert p^\prime_{i}) \\
        \leq \quad & \cdots \nonumber \\
        \leq \quad & \sum_{i=1}^{N}\sum_{j=i+1}^{N} \mathbb{E}_{p_i^{[k]},p_j^{[k]}}[r] + \sum_{i=1}^{N} D(p_{i}^{[k]}\Vert p^\prime_{i}),
    \end{align} which means the sequence $\{(p_{0}^{[k]}, \dots, p_{N-1}^{[k]})\}_k$ monotonically decreases the function:
    \begin{align}
        F(p_1,\dots,p_N) = \sum_{i=1}^{N}\sum_{j=i+1}^{N} \mathbb{E}_{p_i,p_j}[r] + \sum_{i=1}^{N} D(p_{i}\Vert p^\prime_{i}). \label{eq:temp_obj_func2}
    \end{align} Based on the non-negativity of KL-divergence, this function (\ref{eq:temp_obj_func2}) is lower-bounded. Thus, based on the monotone convergence theorem, the sequence $\{(p_{1}^{[k]}, \dots, p_{N}^{[k]})\}_k$ is convergent under (\ref{eq:convergence_measure}). \qed
\end{proof}

\subsection{Proof for Theorem~\ref{theorem:nash_equilibrium}}
\begin{proof}
    Theorem~\ref{theorem:multi_agent_convergence} has proved the convergence of the mixed strategy sequence under (\ref{eq:convergence_measure}). By contradiction, we can prove the limit point is a global Nash equilibrium of the mixed strategy game (\ref{eq:general_sum_player_definition}). Denote the sequence converges to the limit point $(p_1^*, \dots, p_N^*)$, assume the limit point is not a global Nash equilibrium of (\ref{eq:general_sum_player_definition}), then
    \begin{align}
        & \exists i\in\mathcal{I}, \exists p_i(s)\in\mathcal{P}, \text{s.t. } \nonumber \\
        & \mathbb{E}_{p_i,p_{/i}^*}[r] + D(p_i\Vert p^\prime_i) < \mathbb{E}_{p_i^*,p_{/i}^*}[r] + D(p_i^*\Vert p^\prime_i) \label{eq:contradict_ineq}.
    \end{align} Since the mixed strategy sequence from Algorithm~\ref{algo:multi_agent_update} monotically decreases the lower-bounded function (\ref{eq:temp_obj_func2}) and the right hand side of (\ref{eq:contradict_ineq}) is part of the summation in (\ref{eq:temp_obj_func2}), the inequality (\ref{eq:contradict_ineq}) indicates $(p_1^*, \dots, p_N^*)$ is not a limit point under (\ref{eq:convergence_measure}), which contradicts the assumption, thus completes the proof. \qed
\end{proof}

\subsection{Proof for Theorem~\ref{theorem:risk_reduction}}
\begin{proof}
    Recall in the proof for Theorem~\ref{theorem:multi_agent_convergence}, the sequence of $\{(p_{1}^{[k]}, \dots, p_{N}^{[k]})\}$ from the iterations of Algorithm \ref{algo:multi_agent_update} monotonically decreases the function:
    \begin{align}
        \sum_{i=1}^{N}\sum_{j=i+1}^{N} \mathbb{E}_{p_i^{[k]},p_j^{[k]}}[r] + \sum_{i=1}^{N} D(p_{i}^{[k]}\Vert p^\prime_{i}).
    \end{align} Since $p_i^{[0]}=p^\prime_i$, since KL-divergence is zero between two identical distributions, we have the following inequality:
    \begin{align}
        & \sum_{i=1}^{N}\sum_{j=i+1}^{N} \mathbb{E}_{p_i^*,p_j^*}[r] + \sum_{i=1}^{N} D(p_{i}^*\Vert p^\prime_{i}) \nonumber \\
        \leq \quad & \sum_{i=1}^{N}\sum_{j=i+1}^{N} \mathbb{E}_{p^\prime_i,p^\prime_j}[r] + \sum_{i=1}^{N} D(p^\prime_{i}\Vert p^\prime_{i}) \\
        = & \sum_{i=1}^{N}\sum_{j=i+1}^{N} \mathbb{E}_{p^\prime_i,p^\prime_j}[r] + 0,
    \end{align} which completes the proof. \qed
\end{proof}

\newpage

\section*{Appendix B: Pseudocode}
\begin{algorithm} \label{algo:brne_reject}
    \caption{BRNE navigation (\emph{rejection} sampling)}
    \begin{algorithmic}[1]
        \Procedure{BRNE\_Nav}{$\mathbf{p}^\prime_{1}, \dots, \mathbf{p}^\prime_{N}, \gamma$} \Comment{$\gamma>1$.}
        \State $k \gets 0$ \Comment{$k$ is the negotiation step.}
        \For{$i\in[1,N]$} \Comment{Number of agents}
            \For{$j\in[1,M]$} \Comment{Number of samples$/$agent}
                \State $\mathbf{s}_{i,j}^{[k]} \gets \mathbf{s}^\prime_{i,j}$ \Comment{Sample from nominal strategy}
            \EndFor
        \EndFor
        \While{convergence criterion not met}
            \For{$i\in[1,N]$}
                \State $\mathbf{p}^{[k]}_{/i} \gets \left({\bigcup_{a=1}^{i-1}} \mathbf{p}^{[k]}_a\right) {\cup} \left(\bigcup_{a=i+1}^{N} \mathbf{p}^{[k+1]}_a\right)$
                \For{$j\in[1,M]$} \Comment{Rejection sampling}
                    \State $\mathbf{s}_{new} \gets$ Draw new sample from $p^\prime_i$
                    \State $\omega \gets z(\mathbf{s}_{new}\vert \mathbf{p}_{/i}^{[k]})$
                    \State $u \gets Uniform(0, 1)$
                    \While{$\gamma\cdot u \geq \omega$}
                        \State $\mathbf{s}_{new} \gets$ Draw new sample from $p^\prime_i$
                        \State $\omega \gets z(\mathbf{s}_{new}\vert \mathbf{p}_{/i}^{[k]})$
                        \State $u \gets Uniform(0, 1)$
                    \EndWhile
                    \State $\mathbf{s}_{i,j}^{[k]} \gets \mathbf{s}_{new}$
                \EndFor
            \EndFor
            \State $k \gets k+1$
        \EndWhile
        \State \textbf{return} $\mathbf{p}_{1}^{[k]}, \dots, \mathbf{p}_{N}^{[k]}$
        \EndProcedure
    \end{algorithmic}
\end{algorithm}

\begin{algorithm} \label{algo:brne_importance}
    \caption{BRNE navigation (\emph{importance} sampling)}
    \begin{algorithmic}[1]
        \Procedure{BRNE\_Nav}{$\mathbf{p}^\prime_{1}, \dots, \mathbf{p}^\prime_{N}$}
        \State $k \gets 0$ \Comment{$k$ is the negotiation step.}
        \For{$i\in[1,N]$} \Comment{Number of agents}
            \For{$j\in[1,M]$} \Comment{Number of samples$/$agent}
                \State $\mathbf{w}_{i,j}^{[k]} \gets 1$ \Comment{Initialize sample weights}
            \EndFor
        \EndFor
        \While{convergence criterion not met}
            \For{$i\in[1,N]$}
                \For{$j\in[1,M]$} \Comment{Importance sampling}
                    \State $\displaystyle \mathbf{w} {\gets} {\frac{1}{M}} {\sum_{a=1}^{i-1}} {\sum_{b=1}^{M}} \mathbf{w}_{a,b}^{[k]} r(\mathbf{s}_{i,j}^{\prime}, \mathbf{s}_{a,b}^{\prime})$ 
                    \State $\displaystyle \mathbf{w} {\gets} \mathbf{w} {+} {\frac{1}{M}}{\sum_{a=i+1}^{N}}{\sum_{b=1}^{M}} {\mathbf{w}_{a,b}^{[k+1]}}{r(\mathbf{s}_{i,j}^{\prime}, \mathbf{s}_{a,b}^{\prime})}$ 
                    \State $\mathbf{w}_{i,j}^{[k+1]} \gets \mathbf{w}$
                \EndFor
                \State{Normalize $\mathbf{w}_{i}^{[k+1]} = [ \mathbf{w}_{i,1}^{[k+1]}, \dots, \mathbf{w}_{i,M}^{[k+1]} ]$}
            \EndFor
            \State $k \gets k+1$
        \EndWhile
        \State \textbf{return} $\mathbf{w}_{1}^{[k]}, \dots, \mathbf{w}_{N}^{[k]}$
        \EndProcedure
    \end{algorithmic}
\end{algorithm}

\begin{table*}[ht!]
    \centering
    \captionsetup{justification=centering}
    \caption{Meta statistics of human dataset crowd navigation experiments.}
    \setlength{\tabcolsep}{18.0pt}
    \begin{tabular}{|c|cccc|}
        \toprule
        Algorithm & \begin{tabular}[c]{@{}c@{}}Overall\\Success rate\end{tabular} & \begin{tabular}[c]{@{}c@{}}Failure cases\\(T/PC/EC)\end{tabular} & \begin{tabular}[c]{@{}c@{}}Total pedestiran\\collisions\end{tabular} & \begin{tabular}[c]{@{}c@{}}Planning wall time\\per episode (s)\end{tabular}\\
        \midrule
        \textsf{BRNE}(Ours) & $32/33$ & $(0/1/0)$ & $1$ & $20.39\pm 6.93$ \\
        \textsf{Baseline}(Ours) & $10/33$ & $(0/23/0)$ & $37$ & $9.07\pm 2.12$ \\
        \midrule
        \textsf{SF} & $32/33$ & $(1/0/0)$ & $1$ & $18.23\pm 7.41$ \\
        \textsf{ORCA} & $24/33$ & $(1/8/0)$ & $15$ & $48.84\pm 23.06$ \\
        \textsf{CADRL} & $18/33$ & $(0/14/1)$ & $40$ & $46.78\pm 21.38$ \\
        \textsf{Baseline} & $9/33$ & $(1/23/0)$ & $64$ & $51.12\pm16.21$ \\
        \bottomrule 
    \end{tabular}
    \label{table:appendex_table_1}
\end{table*}

\begin{table*}[ht!]
    \centering
    \captionsetup{justification=centering}
    \caption{Path quality results of human dataset crowd navigation experiments.}
    \setlength{\tabcolsep}{10.0pt}
    \begin{tabular}{|c|ccccc|}
        \toprule
        Algorithm & \begin{tabular}[c]{@{}c@{}}Path length (m)\end{tabular} & \begin{tabular}[c]{@{}c@{}}Path length ratio\end{tabular} & \begin{tabular}[c]{@{}c@{}}Goal traversal ratio\end{tabular} & \begin{tabular}[c]{@{}c@{}}Path irregularity\\(radians)\end{tabular} & \begin{tabular}[c]{@{}c@{}}Path traversal\\time (s)\end{tabular} \\
        \midrule
        \textsf{BRNE}(Ours) & $16.56\pm 3.85$ & $1.06\pm 0.05$ & $0.04$ & $1.58\pm 1.04$ & $18.95\pm 4.81$ \\
        \textsf{Baseline}(Ours) & $15.42\pm 3.71$ & $0.99\pm 0.03$ & $0.02$ & $1.58\pm 1.05$ & $17.66\pm 4.28$ \\
        \midrule
        \textsf{SF} & $17.25\pm 4.05$ & $1.15\pm 0.23$ & $0.52$ & $1.66\pm 0.95$ & $15.93\pm 4.17$ \\
        \textsf{ORCA} & $17.66\pm 5.22$ & $1.17\pm 0.26$ & $0.21$ & $1.56\pm 1.01$ & $22.06\pm 7.78$ \\
        \textsf{CADRL} & $15.70\pm 3.72$ & $1.04\pm 0.05$ & $0.51$ & $1.68\pm 1.04$ & $15.14\pm 4.21$ \\
        \textsf{Baseline} & $15.88\pm 3.57$ & $1.05\pm 0.11$ & $0.09$ & $1.65\pm 1.02$ & $16.08\pm 3.73$ \\
        \bottomrule                                                  
    \end{tabular}
    \label{table:appendex_table_2}
\end{table*}

\begin{table*}[ht!]
    \centering
    \captionsetup{justification=centering}
    \caption{Motion quality results of human dataset crowd navigation experiments.}
    \setlength{\tabcolsep}{13.0pt}
    \begin{tabular}{|c|cccc|}
        \toprule
        Algorithm & \begin{tabular}[c]{@{}c@{}}Average speed ($m/s$)\\(max=$1.2m/s$)\end{tabular} & \begin{tabular}[c]{@{}c@{}}Average energy\\expenditure (J)\end{tabular} & \begin{tabular}[c]{@{}c@{}}Average\\acceleration ($m/s^2$)\end{tabular} & \begin{tabular}[c]{@{}c@{}}Average jerk ($m/s^3$)\end{tabular}\\
        \midrule
        \textsf{BRNE}(Ours) & $0.88\pm 0.31$ & $327.97\pm 79.02$ & $3.90\pm 7.76$ & $151.44\pm 257.73$ \\
        \textsf{Baseline}(Ours) & $0.88\pm 0.31$ & $305.05\pm 74.25$ & $3.91\pm 7.84$ & $155.04\pm 260.59$ \\
        \midrule
        \textsf{SF} & $1.09\pm 0.27$ & $398.33\pm 95.45$ & $0.39\pm 1.43$ & $2.70\pm 28.23$ \\
        \textsf{ORCA} & $0.80\pm 0.27$ & $315.03\pm 89.89$ & $0.31\pm 1.26$ & $6.28\pm 28.85$ \\
        \textsf{CADRL} & $1.04\pm 0.36$ & $367.58\pm 87.24$ & $0.93\pm 2.91$ & $31.68\pm 87.71$ \\
        \textsf{Baseline} & $0.99\pm 0.42$ & $370.88\pm 84.55$ & $4.81\pm 9.07$ & $180.86\pm 303.88$ \\
        \bottomrule                                                  
    \end{tabular}
    \label{table:appendex_table_3}
\end{table*}

\section*{Appendix C: Complete human dataset benchmark results}
Complete results from the human dataset benchmark are shown in Table~\ref{table:appendex_table_1}, Table~\ref{table:appendex_table_2}, and Table~\ref{table:appendex_table_3}. Definitions of the metrics can be found in~\citep{biswas_socnavbench_2022}.

\end{document}